\newcommand{\R}{\mathbb{R}}
\renewcommand{\P}{\mathbb{P}}
\newcommand{\E}{\mathbb{E}}
\newcommand{\btheta}{\boldsymbol{\theta}}
\newcommand{\x}{\mathbf{x}}
\newcommand{\X}{\mathbf{X}}
\newcommand{\y}{\mathbf{y}}
\newcommand{\g}{\mathbf{g}}
\newcommand{\e}{\mathbf{e}}
\renewcommand{\u}{\mathbf{u}}
\renewcommand{\v}{\mathbf{v}}
\newcommand{\w}{\mathbf{w}}
\newcommand{\bP}{\mathbf{P}}
\newcommand{\cL}{\mathcal{L}}
\newcommand{\cP}{\mathcal{P}}
\newcommand{\cZ}{\mathcal{Z}}
\newcommand{\Q}{\mathbf{Q}}
\newcommand{\T}{\mathsf{T}}
\newcommand{\N}{\mathcal{N}}
\newcommand{\1}{\mathbf{1}}
\newcommand{\I}{\mathbf{I}}
\newcommand{\op}{\operatorname{op}}
\newcommand{\Cov}{\mathrm{Cov}}
\newcommand{\Var}{\mathrm{Var}}
\DeclareMathOperator*{\argmin}{arg\,min}
\newcommand{\bSigma}{\mathbf{\Sigma}}
\newtheorem{theorem}{Theorem}[section]
\newtheorem{corollary}[theorem]{Corollary}
\newtheorem{lemma}[theorem]{Lemma}
\newtheorem{proposition}[theorem]{Proposition}
\theoremstyle{definition}
\newtheorem{definition}[theorem]{Definition}
\begin{document}

\title[Tree-projected gradient descent]{Tree-projected gradient descent for
estimating gradient-sparse parameters on graphs}

\author{Sheng Xu}
\email{sheng.xu@yale.edu}

\author{Zhou Fan}
\email{zhou.fan@yale.edu}

\author{Sahand Negahban}
\email{sahand.negahban@yale.edu}
\address{Department of Statistics and Data Science \\ Yale University \\
New Haven, CT 06511}

\begin{abstract}%
We study estimation of a gradient-sparse parameter vector $\btheta^* \in \R^p$,
having strong gradient-sparsity $s^*:=\|\nabla_G \btheta^*\|_0$ on
an underlying graph $G$. Given observations
$Z_1,\ldots,Z_n$ and a smooth, convex loss function $\cL$ for which
$\btheta^*$ minimizes the population risk $\E[\cL(\btheta;Z_1,\ldots,Z_n)]$,
we propose to estimate $\btheta^*$
by a projected gradient descent algorithm that iteratively and approximately projects gradient steps onto spaces of vectors
having small gradient-sparsity over low-degree spanning trees
of $G$. We show that, under suitable restricted strong
convexity and smoothness assumptions for the loss, the resulting estimator
achieves the squared-error risk $\frac{s^*}{n} \log (1+\frac{p}{s^*})$
up to a multiplicative constant that is independent of $G$.
In contrast, previous polynomial-time algorithms have only been shown to
achieve this guarantee in more specialized settings, or
under additional assumptions for $G$
and/or the sparsity pattern of $\nabla_G \btheta^*$. As applications of our
general framework, we apply our results to the examples of
linear models and generalized linear models with random design.
\end{abstract}

\maketitle

\noindent
\textbf{\textit{Keywords}}: structured sparsity, changepoint models, piecewise-constant signals,
compressed sensing, graph signal processing, approximation algorithms

\section{Introduction}

We study estimation of a piecewise-constant or gradient-sparse
parameter vector on a given graph. This problem may arise in statistical changepoint
detection \citep{killick2012optimal,fryzlewicz2014wild}, where an unknown vector on a line graph has a sequential changepoint structure. In image denoising
\citep{rudin1992nonlinear} and compressed sensing
\citep{candes2006robust,donoho2006compressed}, this vector may represent a gradient-sparse image on a 2D or 3D lattice graph, as arising in medical X-rays and CT scans. For applications of epidemic tracking and anomaly detection on general graphs and networks, this vector may indicate regions of infected or abnormal nodes \citep{ariascastro2011detection}.

We consider the following general framework: Given observations $Z_1^n:=(Z_1,\ldots,Z_n) \in \cZ^n$
with distribution $\cP$, we seek to estimate a parameter
$\btheta^* \in \R^p$ associated
to $\cP$. The coordinates of $\btheta^*$ are identified with the vertices of a
known graph $G=(V,E)$, where the number of vertices is $|V|=p$.
Denoting by $\nabla_G:\R^p \to \R^{|E|}$ the discrete gradient operator
\begin{equation}\label{eq:graphgrad}
\nabla_G \btheta=\big(\theta_i-\theta_j:(i,j) \in E\big),
\end{equation}
we assume that the gradient sparsity $s^*:=\|\nabla_G \btheta^*\|_0$ is
small relative to the total number of edges in $G$.
For example, when $G$ is a line or lattice graph, $s^*$ measures the number of
changepoints or the total boundary size between the constant pieces of an image, respectively.
For a given convex and differentiable loss function $\cL:\R^p \times \cZ^n
\to \R$, we assume that $\btheta^*$ is related to
the data distribution $\cP$ as the minimizer of the population risk,
\[\btheta^*=\argmin_{\btheta \in \R^p}
\E_\cP\big[\cL(\btheta;Z_1^n)\big].\]
Important examples include linear and generalized linear models
for $Z_i=(\x_i,y_i)$, where $\btheta^*$ is the vector of regression coefficients
and $\cL$ is the usual squared-error or negative log-likelihood loss.

Our main result implies that, under suitable restricted strong convexity and
smoothness properties of the loss \citep{negahban2012unified}
and subgaussian assumptions on the noise, a
polynomial-time projected gradient descent algorithm yields an estimate
$\widehat{\btheta}$ which achieves the squared-error guarantee
\begin{equation}\label{eq:mainguarantee}
\|\widehat{\btheta}-\btheta^*\|_2^2 \leq
C \cdot \frac{s^*}{n}\log\Big(1+\frac{p}{s^*}\Big)
\end{equation}
with high probability. Here, $C>0$ is a constant independent of the graph $G$, and depends only on the loss $\cL$ and distribution $\cP$ via their convexity, smoothness, and subgaussian constants.

Despite the simplicity of the guarantee (\ref{eq:mainguarantee}) and its
similarity to results for estimating \emph{coordinate-sparse} parameters
$\btheta^* \in \R^p$, to our knowledge, our work is the first to establish
this guarantee in polynomial time for estimating \emph{gradient-sparse} parameters on general
graphs, including the 1D line. In particular, (\ref{eq:mainguarantee}) is not
necessarily achieved by convex approaches which constrain or regularize the
$\ell_1$ (total-variation) relaxation $\|\nabla_G \btheta^*\|_1$,
for the reason that an ill-conditioned discrete gradient matrix $\nabla_G \in \R^{|E| \times p}$
contributes to the restricted convexity and smoothness properties of the resulting convex problem \citep{hutter2016optimal,fan2018approximate}. We
discuss this further below, in the context of related literature.

Our work instead analyzes an algorithm that
iteratively and approximately computes the projected gradient update
\begin{equation}\label{eq:approxproj}
\btheta_t \approx \argmin_{\btheta \in \R^p:\|\nabla_{T_t} \btheta\|_0 \leq S}
\|\btheta-\btheta_{t-1}+\eta \cdot \nabla \cL(\btheta_{t-1};Z_1^n)\|_2
\end{equation}
over a sequence of low-degree spanning trees $T_1,T_2,\ldots$ of $G$.\footnote{Here, $\nabla \cL(\btheta_{t-1};Z_1^n)$ is the gradient of $\cL(\btheta;Z_1^n)$ with respect to $\btheta$ at $\btheta_{t-1}$, and $\nabla_{T_t} \btheta$ is the discrete gradient operator
(\ref{eq:graphgrad}) over the edges in $T_t$ instead of $G$.}
To obtain a polynomial-time algorithm,
we approximate each projection onto the non-convex space
$\{\btheta \in \R^p:\|\nabla_{T_t} \btheta\|_0 \leq S\}$
by discretizing the signal domain $\R^p$ and applying a
dynamic-programming recursion over $T_t$ to compute the discrete projection. For graphs $G$ that do not admit spanning trees of
low degree, we apply an idea of \citep{padilla2017dfs} and
construct $T_t$ using a combination of edges in $G$ and additional edges
representing backtracking paths along a depth-first-search traversal of $G$.

Our algorithm and analysis rely on an important insight from
\citep{jain2014iterative}, which is to perform each projection using a target
sparsity-level $S$ that is larger than the true gradient-sparsity $s^*$
by a constant factor. This idea was applied
in \citep{jain2014iterative} to provide a statistical analysis of
iterative thresholding procedures such as IHT, CoSaMP, and HTP for estimating
coordinate-sparse parameters
\citep{blumensath2009iterative,needell2009cosamp,foucart2011hard}.
A key ingredient in our proof, Lemma \ref{lem:key} below, is a combinatorial
argument which compares the errors of approximating any vector $\u$ by vectors $\u^S$ and $\u^*$ that are gradient-sparse over a tree, with two different sparsity levels $S$ and $s^*$. This extends a central lemma of \citep{jain2014iterative} from the simpler setting of coordinate-sparsity to a setting of gradient-sparsity on trees.

\subsection{Related literature}\hfill

\noindent
Existing literature on this and related problems is extensive, and we provide
here a necessarily partial overview.

{\bf Convex approaches: } Estimating a piecewise-constant vector $\btheta^*$ in both the
direct-measurements model $y_i=\theta^*_i+e_i$ and the indirect
linear model $y_i=\x_i^\top \btheta^*+e_i$
has been of interest since early work on the fused lasso
\citep{tibshirani2005sparsity,rinaldo2009properties} and compressed sensing
\citep{candes2006stable,candes2006robust,donoho2006compressed}.
A natural and commonly-used approach is to constrain or penalize the
total-variation semi-norm $\|\nabla_G \btheta^*\|_1$
\citep{rudin1992nonlinear}. Statistical properties of this approach have been
extensively studied, including estimation guarantees over signal classes of 
either bounded variation or bounded exact gradient-sparsity
\citep{mammen1997locally,hutter2016optimal,sadhanala2016total,dalalyan2017prediction,lin2017sharp,ortelli2018total};
exact or robust recovery guarantees in compressed sensing contexts
\citep{needell2013near,needell2013stable,cai2015guarantees}; and correct
identification of changepoints or of the discrete gradient support
\citep{harchaoui2010multiple,sharpnack2012sparsistency}. Extensions to
higher-order trend-filtering methods have been proposed and studied in
\citep{kim2009ell_1,wang2016trend,sadhanala2017higher,guntuboyina2017adaptive}.
These works have collectively considered settings of both direct and indirect
linear measurements, for the 1D line, 2D and 3D lattices, and more
general graphs.

 In the above work, statistical guarantees analogous to
(\ref{eq:mainguarantee}) have only been obtained under
restrictions for either $G$ or $\btheta^*$, which we are able to remove using a non-convex approach.
\citep{hutter2016optimal} established  a guarantee analogous to (\ref{eq:mainguarantee}) when certain
compatibility and inverse-scaling factors of $G$ are $O(1)$; a
sufficient condition is that $G$ has constant maximum degree, and the
Moore-Penrose pseudo-inverse $\nabla_G^\dagger$ has constant
$\ell_1 \to \ell_2$ operator norm. This notably does not include the 1D line or
2D lattice. \citep{dalalyan2017prediction},
\citep{lin2017sharp}, and \citep{guntuboyina2017adaptive} developed
complementary results, showing that (\ref{eq:mainguarantee}) can hold
for the 1D line provided that the $s^*$ changepoints of $\btheta^*$ have
minimum spacing $\gtrsim p/(s^*+1)$. An extension of this to
tree graphs was proven in \citep{ortelli2018total}. Roughly speaking,
$\nabla_G^\dagger$ is an effective design matrix for an associated sparse
regression problem, and the spacing condition ensures that the \emph{active}
variables in the regression model are weakly correlated, even if the
full design $\nabla_G^\dagger$ has strong correlations.

{\bf Synthesis approach: } A separate line of work focuses on the \emph{synthesis} approach, which
uses a sparse representation of $\btheta^*$ in an orthonormal basis
or more general dictionary. Such methods include wavelet approaches
in 1D
\citep{daubechies1988orthonormal,donoho1994ideal,donoho1995adapting},
curvelet and ridgelet frames in 2D
\citep{candes1998ridgelets,candes2000curvelets,candes2004new},
and tree-based wavelets for more general graphs
\citep{gavish2010multiscale,sharpnack2013detecting}.
 \citep{elad2007analysis} and \citep{nam2013cosparse} compare and discuss differences between
the synthesis and analysis approaches. Note that in general, an $s^*$-gradient-sparse signal $\btheta^*$ may not
admit a $O(s^*)$-sparse representation in an orthonormal basis. For example, $\btheta^*$ having $s^*$ changepoints on the line may
have up to $s^*\log_2 p$ non-zero coefficients in the Haar wavelet basis, and (\ref{eq:mainguarantee}) would be inflated by an additional log factor using Haar wavelets.

{\bf Our contributions: }
In contrast to this first line of work on convex methods, our current work is
most closely related to a third line of literature on methods that penalize or
constrain the exact non-convex gradient-sparsity $\|\nabla_G
\btheta^*\|_0$, rather than its convex $\ell_1$ relaxation
\citep{mumford1989optimal,boykov2001fast,boysen2009consistencies,fan2018approximate}. 
 This direct method
enables theoretical guarantees that remove the spectral conditions on the graph
$G$ as well as the minimum spacing requirements of the work alluded to above.

Our results extend those of \citep{fan2018approximate}, which established similar
guarantees to (\ref{eq:mainguarantee}) for
direct measurements $y_i=\theta^*_i+e_i$. Our projected gradient algorithm is
similar to the proximal-gradient method recently studied in
\citep{xu2019iterative}, which considered indirect linear measurements
$y_i=\x_i^\top \btheta^*+e_i$ in a compressed sensing context. In contrast to
\citep{xu2019iterative}, which considered deterministic measurement errors and a
restrictive RIP-type condition on the measurement design, we provide guarantees
in the statistical setting of random noise, with much weaker conditions for the
regression design, and for a general convex loss. These statistical guarantees are based on a novel
tree-projection algorithm that approximates the graph at every iteration. The
analysis leverages a new bound that controls the approximation error of tree
projections, which is presented in Lemma~\ref{lem:key}.

\section{Tree-projected gradient descent algorithm}
Our proposed algorithm, tree-projected gradient descent (tree-PGD), consists of two main steps:
\begin{enumerate}
\item For a specified vertex degree $d_{\max} \geq 2$ and iteration count $\tau \geq 1$, we construct a sequence of trees $T_1,\ldots,T_\tau$ on the same vertices as $G$, such that each tree $T_t$ has maximum degree $\leq d_{\max}$, and any gradient-sparse vector on $G$ remains gradient-sparse on $T_t$.
\item For a specified step size $\eta>0$ and sparsity level $S>0$, we compute iterates $\btheta_1,\ldots,\btheta_\tau$ where each $\btheta_t$ solves the projected gradient-descent step (\ref{eq:approxproj}) over a discretized domain---see (\ref{eq:discretization}) and (\ref{eq:proj}) below.
\end{enumerate}

For simplicity, we initialize the algorithm at $\btheta_0=0$. The main tuning parameter is the projection sparsity $S$, which controls the bias-variance trade-off and the gradient sparsity of the final estimate $\widehat{\btheta}=\btheta_\tau$.
The additional parameters of the algorithm are $d_{\max}$, $\tau$, $\eta$, and the discretization (\ref{eq:discretization})
specified by $(\Delta_{\min},\Delta_{\max},\delta)$. We discuss these two steps in detail below.

For our theoretical guarantees, it is sufficient to choose $d_{\max}=2$ and to fix the same tree in every iteration. However, we observe in Section \ref{sec:simulations} that using both larger values of $d_{\max}$ and a different random tree in each iteration can yield substantially lower recovery error in practice, so we will state our algorithm and theory to allow for these possibilities.

\subsection{Tree construction}\label{sec:treeconstruction}

We construct a tree $T$ on the vertices $V=\{1,\ldots,p\}$ by the following procedure.

\begin{enumerate}
\item Compute any spanning tree $\tilde{T}$ of $G$. If $\tilde{T}$ has maximum degree $\leq d_{\max}$, then set $T=\tilde{T}$.
\item Otherwise, let $\mathcal{O}_{DFS}$ be the ordering of unique vertices and edges visited in any depth-first-search (DFS) traversal of $\tilde{T}$. For each vertex $v$ whose degree exceeds $d_{\max}$ in $\tilde{T}$, keep its first $d_{\max}$ edges in this ordering, and delete its remaining edges from $\tilde{T}$. Note that the deleted edges are between $v$ and its children.
\item For each such deleted edge $(v,w)$ where $w$ is a child of $v$, let $w'$ be the vertex preceding $w$ in the ordering $\mathcal{O}_{DFS}$, and add to $\tilde{T}$ the edge $(w',w)$. Let $T$ be the final tree.
\end{enumerate}
This procedure is illustrated in Figure \ref{fig:treeconstruction}. We repeat this construction to obtain each tree $T_1,\ldots,T_\tau$.

\begin{figure}
\begin{center}
\includegraphics[width=0.3\textwidth]{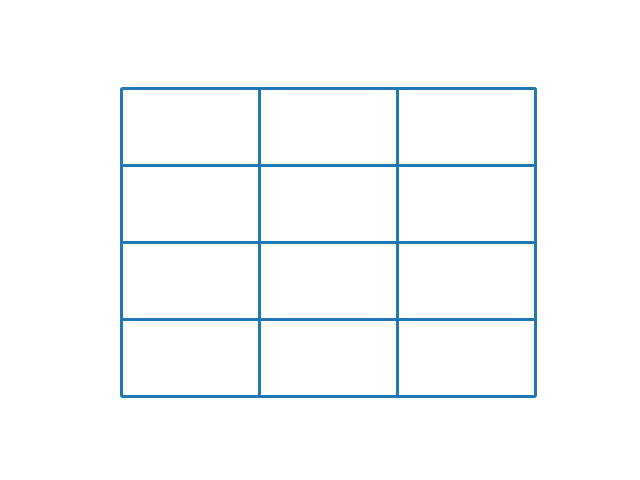}
\includegraphics[width=0.3\textwidth]{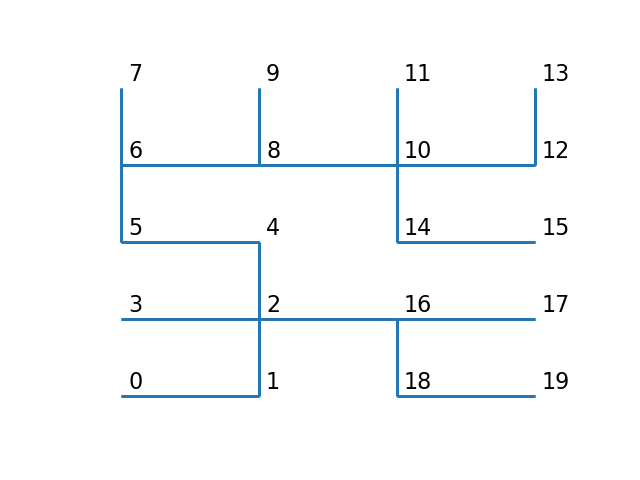}
\includegraphics[width=0.3\textwidth]{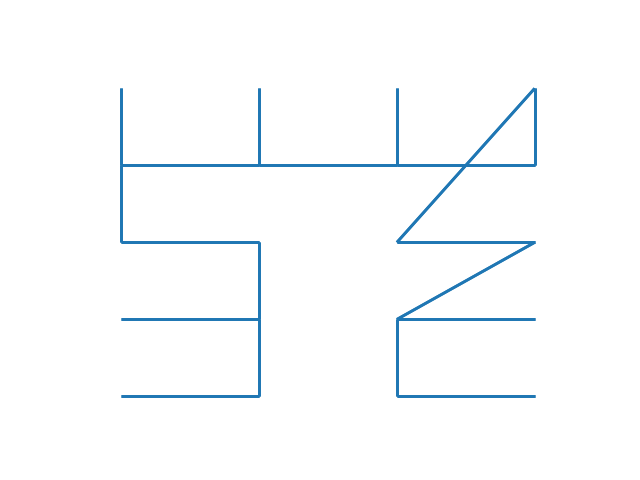}
\end{center}
\vspace{-1cm}
\caption{An illustration of the tree construction method. Left: Original lattice
graph $G$. Middle: A spanning tree $\tilde{T}$ of $G$, with vertices numbered in
DFS ordering. Right: The final tree $T$ with
$d_{\max}=3$, which changes edge $(2,16)$ to $(15,16)$, and edge $(10,14)$ to
$(13,14)$, thus replacing the two edges adjacent to the degree-4 vertices of
$T$.}\label{fig:treeconstruction}
\end{figure}

If $G$ itself has maximum degree $\leq d_{\max}$, then Steps 2 and 3 above are not necessary, and the guarantee (\ref{eq:tree}) below may be trivially strengthened to $\|\nabla_T \btheta\|_0 \leq \|\nabla_G \btheta\|_0$. For graphs $G$ of larger maximum degree, the idea in Steps 2 and 3 above and the associated guarantee (\ref{eq:tree}) are drawn from Lemma 1 of \citep{padilla2017dfs}, which considered the case of a line graph for $T$ (where $d_{\max}=2$).

\begin{lemma}\label{lem:tree}
Let $G=(V,E)$ be any connected graph with $p$ vertices, and
let $T$ be as constructed above. Then $T$ is a tree on $V$ with maximum degree $\leq d_{\max}$. Furthermore, for any $\btheta \in \R^p$,
\begin{equation}\label{eq:tree}
\|\nabla_T \btheta\|_0 \leq 2\|\nabla_G \btheta\|_0.
\end{equation}
The computational complexity for constructing $T$ is $O(|E|)$.
\end{lemma}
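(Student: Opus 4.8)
The plan is to establish the three assertions—that $T$ is a tree, that its maximum degree is $\le d_{\max}$, and the gradient-sparsity bound (\ref{eq:tree})—separately, treating the sparsity bound as the heart of the matter. For the tree structure I would first note that Step 3 introduces exactly one edge for every edge removed in Step 2, so $T$ has the same $p-1$ edges as the spanning tree $\tilde{T}$. Writing $v_1,\ldots,v_p$ for the vertices in pre-order DFS visitation order (so that $v_1$ is the root and the predecessor $w'$ of any $w=v_i$ is $v_{i-1}$), I would show by induction on $i$ that each $v_i$ with $i\ge 2$ is joined in $T$ to some earlier $v_j$: if the edge from $v_i$ to its parent is kept, the parent serves as $v_j$, and if that edge was deleted, Step 3 adds the edge $(v_{i-1},v_i)$. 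These cases coincide when $v_i$ is the first child of its parent, since then $v_{i-1}$ is the parent and the first-child edge, appearing among the first $d_{\max}\ge 2$ edges at the parent, is always kept. A connected graph on $p$ vertices with $p-1$ edges is a tree.

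For the degree bound I would write $\deg_T(v)=\deg_{\tilde{T}}(v)-k_v+m_v$, where $k_v$ is the number of edges at $v$ deleted in Step 2 and $m_v$ the number added at $v$ in Step 3; the deletion of a child-edge and the reconnection edge it produces cancel at the child, leaving only these contributions. Since each vertex is the predecessor of at most one other vertex, $m_v\le 1$, and $m_v=1$ requires the DFS-successor of $v$ to have a deleted parent-edge, which can occur only if $v$ has no children (its first child, if any, has a kept edge) and is therefore a leaf. Thus if $\deg_{\tilde{T}}(v)>d_{\max}$ then $m_v=0$ and $v$ retains exactly $d_{\max}$ edges, while if $\deg_{\tilde{T}}(v)\le d_{\max}$ then $k_v=0$ and a gain is possible only at a leaf, giving $\deg_T(v)\le 2\le d_{\max}$; in every case $\deg_T(v)\le d_{\max}$.

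The substance of the lemma is (\ref{eq:tree}), and here I would pass to the closed walk $W$ that backtracks through $\tilde{T}$ in DFS order, traversing every edge of $\tilde{T}$ exactly twice—once downward from parent to child and once upward. The number of steps of $W$ along which $\btheta$ changes value is then precisely $2\|\nabla_{\tilde{T}}\btheta\|_0\le 2\|\nabla_G\btheta\|_0$. It therefore suffices to assign injectively to each edge of $T$ on which $\btheta$ differs a distinct value-changing step of $W$. I would send every kept edge, which lies in $\tilde{T}$, to its downward traversal. For an added edge $(w',w)$, the vertex $w$ is a non-first child of some $v$ and $w'$ is the last vertex of the previous sibling's subtree, necessarily a leaf; hence the segment of $W$ from $w'$ to $w$ climbs up to $v$ and then descends the deleted edge $(v,w)$. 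I would route this added edge to the downward traversal of $(v,w)$ whenever $\theta_v\neq\theta_w$, and otherwise—when $\theta_v=\theta_w$, which forces $\theta_{w'}\neq\theta_v$—to any upward value-changing step within the climbing portion of the segment.

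The delicate point, which I expect to be the main obstacle, is verifying that this assignment is injective. Downward traversals of kept edges and of deleted edges cannot collide because the two edge sets are disjoint; added edges routed to upward steps are automatically separated from all of these downward traversals; and because the segments between consecutive first-visits partition the steps of $W$, two distinct added edges can never select the same step. Injectivity gives $\|\nabla_T\btheta\|_0\le 2\|\nabla_{\tilde{T}}\btheta\|_0\le 2\|\nabla_G\btheta\|_0$. The complexity claim is then routine: a spanning tree and its DFS ordering are obtained in $O(|V|+|E|)=O(|E|)$ time, and Steps 2 and 3 inspect each tree edge only a constant number of times.
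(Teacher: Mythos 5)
Your proposal is correct and follows essentially the same route as the paper's proof: the degree and tree-structure claims rest on the same observations (deleted edges go to non-first children, whose DFS predecessors are distinct leaves of $\tilde{T}$), and your injective assignment of value-differing $T$-edges to value-changing steps of the closed DFS walk is just an explicit rendering of the paper's argument, which applies the triangle inequality $\1\{\theta_i \neq \theta_{p(i)}\} \leq \sum_{j}\1\{\theta_{p_j}\neq\theta_{p_{j+1}}\}$ along each backtracking segment and invokes the fact that the DFS traverses each edge of $\tilde{T}$ exactly twice. Your more detailed case analysis (routing an added edge to the descent of the deleted edge when $\theta_v\neq\theta_w$, and otherwise to an upward step) and the disjoint-segments justification make the counting airtight but do not change the underlying idea.
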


\subsection{Projected gradient approximation}

The exact minimizer of (\ref{eq:approxproj})
is the projection of $\u_t:=\btheta_{t-1}-\eta \cdot \nabla \cL(\btheta_{t-1};Z_1^n)$ onto the space of $S$-gradient-sparse vectors over $T_t$. This space is a union of $\binom{p-1}{S}$ linear subspaces, and naively iterating over these subspaces is intractable for large $S$. We instead propose to approximate the projection by taking a discrete grid of values
\begin{equation}\label{eq:discretization}
\Delta:=\big\{\Delta_{\min},\Delta_{\min}+\delta,\Delta_{\min}+2\delta,\ldots,\Delta_{\max}-\delta,\Delta_{\max}\big\}
\end{equation}
and performing the minimization over $\btheta \in \Delta^p$.
Thus, our tree-PGD algorithm sets
\begin{equation}\label{eq:proj}
\btheta_t=\argmin_{\btheta \in \Delta^p:\|\nabla_{T_t}\btheta\|_0 \leq S}
\|\btheta-\btheta_{t-1}+\eta \cdot \nabla \cL(\btheta_{t-1};Z_1^n)\|_2
\end{equation}
Each $\btheta_t$ may be computed by a dynamic-programming recursion over $T_t$.\footnote{For the case where $T_t$ is a line graph, an alternative non-discretized algorithm with complexity $O(p^2S)$ is presented in \citep{auger1989algorithms}.}

In detail, fix any target vector $\u \in \R^p$ and a tree $T$ on the vertices $\{1,\ldots,p\}$. To compute
\begin{equation}\label{eq:dpobj}
\argmin_{\btheta \in \Delta^p:\|\nabla_T \btheta\|_0 \leq S} \|\btheta-\u\|_2,
\end{equation}
pick any vertex $o \in \{1,\ldots,p\}$ with degree 1 in $T$ as the root. For each vertex $v$ of $T$, let $T_v$ be the sub-tree consisting of $v$ and its descendants. Let $|T_v|$ be the number of vertices in $T_v$ and $\u_{T_v} \in \R^{|T_v|}$ be the coordinates of $\u$ belonging to $T_v$. Define $f_v:\Delta \times \{0,1,\ldots,S\} \to \R$ by
\begin{equation}\label{eq:dp}
f_v(c,s)=\min\Big\{\|\btheta-\u_{T_v}\|_2^2:\btheta \in \Delta^{|T_v|},\,\|\nabla_{T_v} \btheta\|_0 \leq s,\,\theta_v=c\Big\}.
\end{equation}
This is the minimum over vectors $\btheta$ on $T_v$ that are $s$-gradient-sparse and take value $c \in \Delta$ at $v$.
These values $f_v(c,s)$ may be computed recursively from the leaves to the root, as follows.

\begin{enumerate}
\item For each leaf vertex $v$ of $T$ and each $(c,s) \in \Delta \times \{0,1,\ldots,S\}$, set $f_v(c,s)=(c-u_v)^2$.
\item For each vertex $v$ of $T$ with children $(w_1,\ldots,w_k)$, given $f_w(c,s)$ for all $w \in \{w_1,\ldots,w_k\}$ and $(c,s) \in \Delta \times \{0,1,\ldots,S\}$:
\begin{enumerate}
\item For each $s \in \{0,1,\ldots,S\}$ and $w \in \{w_1,\ldots,w_k\}$, compute
$m_w(s)=\min_{c \in \Delta} f_w(c,s)$.
\item For each $(c,s) \in \Delta \times \{0,1,\ldots,S\}$ and $w \in \{w_1,\ldots,w_k\}$, compute $g_w(c,s)=\min\{f_w(c,s),m_w(s-1)\}$,
where this is taken to be $f_w(c,s)$ if $s=0$.
\item For each $(c,s) \in \Delta \times \{0,1,\ldots,S\}$,  set
\begin{equation}\label{eq:dp2}
f_v(c,s)=(c-u_v)^2+\mathop{\min_{s_1,\ldots,s_k \geq 0}}_{s_1+\ldots+s_k=s}
\Big(g_{w_1}(c,s_1)+\ldots+g_{w_k}(c,s_k)\Big).
\end{equation}
\end{enumerate}
\end{enumerate}
The following then produces the vector $\btheta$ which solves (\ref{eq:dpobj}). \begin{enumerate}
\item[3.] For the root vertex $o$, set $\theta_o=\argmin_{c \in \Delta} f_o(c,S)$ and $S_o=S$.
\item[4.] For each other vertex $v$, given $\theta_v$ and $S_v$: Let $w_1,\ldots,w_k$ be the children of $v$ and let $s_1,\ldots,s_k$ be the choices which minimized (\ref{eq:dp2}) for $f_v(\theta_v,S_v)$. For each $i=1,\ldots,k$, if $g_{w_i}(\theta_v,s_i)=f_{w_i}(\theta_v,s_i)$, then set $\theta_{w_i}=\theta_v$ and $S_{w_i}=s_i$. If $g_{w_i}(\theta_v,s_i)=m_{w_i}(s_i-1)$, then set $\theta_{w_i}=\argmin_{c \in \Delta} f_{w_i}(c,s_i-1)$ and $S_{w_i}=s_i-1$.
\end{enumerate}
The update $\btheta_t$ in (\ref{eq:proj}) is computed by applying this algorithm to $\u \equiv \u_t=\btheta_{t-1}-\eta \cdot \nabla \cL(\btheta_{t-1};Z_1^n)$.

\begin{lemma}\label{lem:dp}
This algorithm minimizes (\ref{eq:dpobj}). Letting $d_{\max}$ be the maximum vertex degree of $T$ and $|\Delta|$ be the cardinality of $\Delta$, its computational complexity is $O(d_{\max}p|\Delta|(S+d_{\max})^{d_{\max}-1})$.
\end{lemma}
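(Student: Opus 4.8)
The plan is to prove correctness by induction over the rooted tree, processing vertices from the leaves to the root, and then to bound the running time by counting the elementary operations of the recursion. Since minimizing $\|\btheta-\u\|_2$ is equivalent to minimizing $\|\btheta-\u\|_2^2$, I work throughout with the squared objective used to define the table (\ref{eq:dp}). The claim to be established inductively is that the computed value $f_v(c,s)$ equals the minimum in (\ref{eq:dp}), i.e.\ the least $\|\btheta-\u_{T_v}\|_2^2$ over $\btheta\in\Delta^{|T_v|}$ with $\theta_v=c$ and $\|\nabla_{T_v}\btheta\|_0\le s$. The base case at a leaf is immediate, as $T_v=\{v\}$ has no edges and the only constraint is $\theta_v=c$. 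For the inductive step at a vertex $v$ with children $w_1,\dots,w_k$, the key observation is that the edges of $T_v$ are the disjoint union of the edges of $T_{w_1},\dots,T_{w_k}$ together with the $k$ connecting edges $(v,w_i)$, so that once $\theta_v=c$ is fixed,
\[
\|\nabla_{T_v}\btheta\|_0=\sum_{i=1}^k\Big(\|\nabla_{T_{w_i}}\btheta\|_0+\mathbf{1}\{\theta_{w_i}\ne c\}\Big),\qquad
\|\btheta-\u_{T_v}\|_2^2=(c-u_v)^2+\sum_{i=1}^k\|\btheta_{T_{w_i}}-\u_{T_{w_i}}\|_2^2 .
\]
Both the squared error and the sparsity budget therefore decouple across the subtrees $T_{w_i}$.

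Given this decoupling, I would argue that $g_{w_i}(c,s_i)$ from step (b) equals the least cost attainable on $T_{w_i}$ when $s_i$ is the combined budget for its internal sparsity and its connecting edge $(v,w_i)$: the cases $\theta_{w_i}=c$ and $\theta_{w_i}\ne c$ correspond to the terms $f_{w_i}(c,s_i)$ and $m_{w_i}(s_i-1)$, respectively. The one subtle point---and the step I expect to require the most care---is that $m_{w_i}(s_i-1)=\min_{c'\in\Delta}f_{w_i}(c',s_i-1)$ minimizes over all $c'$, including $c'=c$, rather than only over $c'\ne c$ as the active-edge case literally demands. I would resolve this using monotonicity: $f_{w_i}(c,\cdot)$ and $m_{w_i}(\cdot)$ are nonincreasing, so if the minimizer defining $m_{w_i}(s_i-1)$ is attained at $c'=c$, then $m_{w_i}(s_i-1)=f_{w_i}(c,s_i-1)\ge f_{w_i}(c,s_i)$ and restricting to $c'\ne c$ leaves $g_{w_i}(c,s_i)=\min\{f_{w_i}(c,s_i),m_{w_i}(s_i-1)\}$ unchanged; hence the relaxation is harmless. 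Assembling the decoupled subtree problems under the shared budget $s=\sum_i s_i$ then reproduces exactly the recursion (\ref{eq:dp2}), closing the induction. Finally, I would check that the backward pass (Steps 3--4) traces the recorded minimizing allocations and branch choices to reconstruct some $\btheta\in\Delta^p$ with $\|\nabla_T\btheta\|_0\le S$ that attains $\min_{c\in\Delta}f_o(c,S)$; the same monotonicity guarantees that charging an edge which turns out to be inactive only relaxes, and never violates, the sparsity budget, so the reconstructed vector is feasible and optimal for (\ref{eq:dpobj}).

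For the complexity, I would account for the work at each vertex $v$ with $k\le d_{\max}-1$ children. Computing $m_w$ in step (a) and $g_w$ in step (b) over all $(c,s)\in\Delta\times\{0,\dots,S\}$ costs $O(k|\Delta|S)$ per vertex, which sums to $O(p|\Delta|S)$ and is dominated. The dominant term is the minimization (\ref{eq:dp2}): for each of the $|\Delta|$ values of $c$ I would enumerate the allocations $(s_1,\dots,s_k)$ with $s_i\ge0$ and $\sum_i s_i\le S$, of which there are $\binom{S+k}{k}\le (S+1)^k\le (S+d_{\max})^{d_{\max}-1}$, spending $O(k)=O(d_{\max})$ work per allocation to form $g_{w_1}(c,s_1)+\dots+g_{w_k}(c,s_k)$ and update the running minimum indexed by $s=\sum_i s_i$. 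This gives $O\big(d_{\max}|\Delta|(S+d_{\max})^{d_{\max}-1}\big)$ per vertex, and summing over the $p$ vertices yields the stated bound $O\big(d_{\max}p|\Delta|(S+d_{\max})^{d_{\max}-1}\big)$.
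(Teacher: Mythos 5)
Your proof is correct and follows essentially the same route as the paper's: a leaves-to-root induction showing $f_v(c,s)$ computes (\ref{eq:dp}) via the decoupling of cost and sparsity budget across subtrees, followed by the same per-vertex operation count (your enumeration of all allocations with $\sum_i s_i \leq S$ indexed by the running sum is a cosmetic variant of the paper's per-$s$ count of $\binom{s+k-1}{k-1}$ partitions, yielding the identical bound). Your explicit monotonicity argument resolving the $c'=c$ case in $m_{w_i}(s_i-1)$, and the companion observation that charging an inactive edge only relaxes the budget, carefully justify a subtlety the paper handles only with its parenthetical remark that $S_v$ is an upper bound on the true sparsity; the one detail you leave implicit is that $k\leq d_{\max}-1$ holds at the root because the algorithm chooses a degree-$1$ root, which the paper notes explicitly.
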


\subsection{Total complexity for the linear model}

Let us compute the total complexity of this tree-PGD algorithm, under parameter settings that yield a rate-optimal statistical guarantee for the linear model discussed in Section \ref{sec:linearmodel}.
We set $d_{\max}$ as a small integer and $S$ as a constant multiple of $s^*$. Evaluating $\nabla \cL(\btheta_{t-1};Z_1^n)$ in the linear model requires two matrix-vector multiplications of complexity $O(np)$, where $n$ is the sample size. Let us assume that the number of graph edges is $|E|=O(p)$, and that the entries of $\btheta^*$ and the noise $\e$ are both of constant order. Then Corollary \ref{cor:prop1} indicates that we may take 
$\Delta_{\max}-\Delta_{\min}=O(\sqrt{p})$, $\delta=O(\sqrt{s^*/np})$, and $\tau=O(\log np)$.
Under these settings, the total complexity of tree-PGD is $O\Big(\big(np+p^2\sqrt{n}(s^*)^{d_{\max}-3/2}\big)\log np\Big)$.
Setting $d_{\max}=2$ (i.e.\ taking  $T_1,\ldots,T_\tau$ to be line graphs) yields the lowest complexity.

\section{Main theorem}

We introduce the following notation which identifies gradient-sparse vectors, partitions of the vertices $\{1,\ldots,p\}$, and subspaces of $\R^p$.

\begin{definition}
Let $T$ be a connected graph on the vertices $V=\{1,\ldots,p\}$, and let $\btheta \in \R^p$. The {\bf partition induced by $\btheta$ over $T$} is the partition of $V$ whose sets are the connected components of $\{(i,j) \in T:\theta_i=\theta_j\}$ in $T$. For such a partition $\cP$ having $k$ sets, the {\bf subspace associated to $\cP$} is the dimension-$k$ subspace of vectors in $\R^p$ taking a constant value over each set. The {\bf boundary of $\cP$ over $T$}, denoted by $\partial_T \cP$, is the set of edges $(i,j) \in T$ where $i,j$ belong to different sets of $\cP$.
\end{definition}

Thus, the sets of the partition $\cP$ induced by $\btheta$ over $T$ are the ``pieces'' of the graph $T$ where $\btheta$ takes a constant value.
If $\cP$ is induced by $\btheta$ over $T$, and $K$ is the associated subspace, then $\btheta \in K$. Furthermore, $\partial_T \cP$ is exactly the edge set where $\nabla_T \btheta$ is non-zero, and $\|\nabla_T \btheta\|_0=|\partial_T \cP|$.

We introduce two properties for the loss, defined for pairs of connected graphs $(T_1,T_2)$ on the same vertices $V$. We will apply these to consecutive pairs of trees generated by tree-PGD.

\begin{definition}[cRSC and cRSS]\label{def:cRSC_cRSS} 
A differentiable function $f: \R^p\to \R$ satisfies cut-restricted strong convexity (cRSC) and smoothness (cRSS) with respect to $(T_1,T_2)$, at sparsity level $S$ and with convexity and smoothness constants $\alpha,L>0$, if the following holds: For any partitions $\cP_1,\cP_2$ of $\{1,\ldots,p\}$ where $|\partial_{T_1} \cP_1| \leq S$
and $|\partial_{T_2} \cP_2| \leq S$, and any $\btheta_1,\btheta_2 \in K:=K_1+K_2$ where $K_1,K_2$ are the subspaces associated to $\cP_1,\cP_2$,
\begin{align}
f(\btheta_2)\geq f(\btheta_1)+\langle\btheta_2-\btheta_1,\nabla f(\btheta_1)\rangle+\frac{\alpha}{2}\|\btheta_2-\btheta_1\|_2^2,\label{def:cRSC}\\
f(\btheta_2)\leq f(\btheta_1)+ \langle\btheta_2-\btheta_1,\nabla f(\btheta_1)\rangle+\frac{L}{2}\|\btheta_2-\btheta_1\|_2^2.\label{def:cRSS}
\end{align}
\end{definition}

\begin{definition}[cPGB]\label{def:ProjGrad}
A differentiable function $f: \R^p\to \R$ has a cut-projected gradient bound
(cPGB) of $\Phi(S)$ with respect to $(T_1,T_2)$, at a point $\btheta^* \in \R^p$
and sparsity level $S$, if the following holds: For any partitions $\cP_1,\cP_2$
of $\{1,\ldots,p\}$ where $|\partial_{T_1}\cP_1|\leq S$ and
$|\partial_{T_2}\cP_2|\leq S$, letting $K_1,K_2$ be their associated subspaces and
$\bP_K$ be the orthogonal projection onto $K:=K_1+K_2$,
\begin{align}\label{eq:ProjGrad}
\|\bP_K \nabla f(\btheta^*)\|_2 \leq \Phi(S).
\end{align}
\end{definition}

To provide some interpretation, the below lemma gives an example for this function $\Phi$ in the important setting where $\w^{\T}\nabla \cL(\btheta^*;Z_1^n)$ is subgaussian for any $\w\in K$.

\begin{lemma}\label{lem:ProjGrad}
Let $S \geq 1$, let $T_1,T_2$ be trees on $\{1,\ldots,p\}$, and let $\btheta^* \in \R^p$. Suppose, for any subspace $K$ as defined in Definition \ref{def:ProjGrad} and any $\w\in K$, that $\w^\top \nabla\cL(\btheta^*;Z_1^n)$ is $\sigma^2/n$-subgaussian.\footnote{This means that for any $t>0$, $\P[|\w^\top \nabla\cL(\btheta^*;Z_1^n)|>t] \leq 2e^{-nt^2/(2\sigma^2)}$.}
Then for any $k>0$ and a constant $C_k>0$ depending only on $k$, with probability at least $1-p^{-k}$, the loss $\cL(\cdot\,;Z_1^n)$ has the cPGB
\[\Phi(S)=C_k\sigma\sqrt{\tfrac{S}{n}\log\left(1+\tfrac{p}{S}\right) }\]
with respect to $(T_1,T_2)$, at $\btheta^*$ and sparsity level $S$.
\end{lemma}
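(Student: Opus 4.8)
The plan is to bound the supremum of $\|\bP_K \g\|_2$, where $\g:=\nabla\cL(\btheta^*;Z_1^n)$, uniformly over all admissible subspaces $K=K_1+K_2$, by combining a covering argument on each fixed subspace with a union bound over a carefully enumerated family of subspaces. The first and conceptually most important step is a combinatorial reduction that shrinks this family to \emph{edge-cut} subspaces. Given a partition $\cP_1$ with $|\partial_{T_1}\cP_1|\le S$, set $F_1:=\partial_{T_1}\cP_1$; since $T_1$ is a tree, deleting the $|F_1|\le S$ edges of $F_1$ splits $T_1$ into exactly $|F_1|+1$ connected components, whose induced partition $\cP_1'$ refines $\cP_1$ and satisfies $\partial_{T_1}\cP_1'=F_1$. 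Hence the associated subspace $K_1$ lies in the edge-cut subspace $K_1'$ of dimension $|F_1|+1\le S+1$, and similarly for $T_2$. Writing $K':=K_1'+K_2'$, we get $K\subseteq K'$, so $\|\bP_K\g\|_2\le\|\bP_{K'}\g\|_2$, and it suffices to bound $\|\bP_{K'}\g\|_2$ over edge-cut subspaces. These are indexed by pairs $(F_1,F_2)$ of edge subsets of $T_1,T_2$ with $|F_i|\le S$, so their number is at most $\big(\sum_{b=0}^S\binom{p-1}{b}\big)^2\le (ep/S)^{2S}$, and each satisfies $\dim K'\le 2S+2$. Note that $\cP_1',\cP_2'$ are themselves admissible partitions, so the subgaussian hypothesis applies to every $\w\in K'$.

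For a single fixed subspace $K'$ of dimension $m\le 2S+2$, I would take a $1/2$-net $\mathcal{N}$ of its unit sphere, with $|\mathcal{N}|\le 5^m$. For any unit $\w\in K'$ and its closest net point $\w'$, the identity $\langle\w-\w',\g\rangle=\langle\w-\w',\bP_{K'}\g\rangle\le\tfrac12\|\bP_{K'}\g\|_2$ gives the standard bound $\|\bP_{K'}\g\|_2\le 2\max_{\w'\in\mathcal{N}}|\langle\w',\g\rangle|$. Since each $\w'\in K'$, the footnote tail bound and a union bound over $\mathcal{N}$ yield $\P[\|\bP_{K'}\g\|_2>2s]\le 2\cdot 5^m e^{-ns^2/(2\sigma^2)}\le 2\cdot 5^{2S+2}e^{-ns^2/(2\sigma^2)}$. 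A further union bound over the at most $(ep/S)^{2S}$ edge-cut subspaces then gives
\[
\P\Big[\sup_{K'}\|\bP_{K'}\g\|_2>2s\Big]\le 2\cdot 5^{2S+2}\Big(\tfrac{ep}{S}\Big)^{2S}e^{-ns^2/(2\sigma^2)}.
\]

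Finally I would set $s=\tfrac12\Phi(S)$ and absorb the combinatorial exponent. Taking logarithms, the exponent of the bound is at most $2S\log(ep/S)+(2S+2)\log 5+\log 2-ns^2/(2\sigma^2)$. Using the elementary facts that $\log(ep/S)\le C\log(1+p/S)$ and $\log 2\le\log(1+p/S)$ for $1\le S\le p$, the combinatorial part is at most $C_1 S\log(1+\tfrac{p}{S})$. Since $S\mapsto S\log(1+\tfrac{p}{S})$ is increasing, it is $\ge\log(1+p)\ge\log p$ for $S\ge 1$, so the deviation target $k\log p$ is also dominated by $S\log(1+\tfrac{p}{S})$. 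With $s=\tfrac12\Phi(S)$ one has $ns^2/(2\sigma^2)=\tfrac{C_k^2}{8}S\log(1+\tfrac{p}{S})$, and choosing $C_k$ so that $\tfrac{C_k^2}{8}\ge C_1+k$ makes the total exponent $\le -k\log p$, i.e.\ the failure probability is at most $p^{-k}$. (When $S\ge p$ there is essentially one subspace $K'=\R^p$, and the same computation with $5^p$ in place of the binomial count goes through.)

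I expect the main obstacle to be the reduction step: recognizing that an arbitrary small-boundary partition is refined by the edge-cut partition obtained by deleting its boundary edges, so that every relevant subspace is nested inside one of the $(ep/S)^{2S}$ edge-cut subspaces of dimension $\le 2S+2$. This is what makes the enumeration finite and keeps the effective dimension at $O(S)$; the covering and union-bound computations, together with the elementary logarithmic inequalities that convert $\log(ep/S)$ and $k\log p$ into multiples of $S\log(1+\tfrac{p}{S})$, are then routine.
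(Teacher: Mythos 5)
Your proposal is correct and follows essentially the same route as the paper's proof: a union bound over the at most $e^{2g(S)}$ subspaces $K=K_1+K_2$ arising from cutting at most $S$ edges in each tree, a $1/2$-net of size $5^{2S+2}$ on the unit sphere of each such $(\leq 2S+2)$-dimensional subspace, the subgaussian tail bound applied at net points, and the observation $S\log(1+p/S) \geq \log(1+p) \geq \log p$ to absorb the target deviation level into the choice of $C_k$. Your explicit edge-cut reduction (deleting the boundary edges of an arbitrary admissible partition to obtain a refining edge-cut partition whose subspace contains $K$) simply makes rigorous a counting step the paper's proof states without elaboration, so it is a welcome but not substantively different addition.
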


The following is our main result, which provides a deterministic estimation guarantee when tree-PGB is applied with an appropriate choice of the projection sparsity $S=\kappa s^*$. This result yields the same type of guarantee for any choice of $d_{\max} \geq 2$ and any sequence of trees.

\begin{theorem}\label{thm:main}
Suppose $\|\nabla_G\btheta^*\|_0 \leq s^*$, where $s^*>0$.
Set $S=\kappa s^*$ in tree-PGD for a constant $\kappa>1$. Let $\tau \geq 1$ and $d_{\max} \geq 2$,
let $T_1,\ldots,T_{\tau}$ be the sequence of trees generated by tree-PGD, and denote $T_0=T_1$ and $S'=S+2s^*+\max(\sqrt{S},d_{\max})$.
Suppose, for all $1\leq t\leq \tau$, that
\begin{enumerate}
\item $\cL(\cdot\,;Z_1^n)$ satisfies cRSC and cRSS with respect to $(T_{t-1},T_t)$, at sparsity level $S'$ and with convexity and smoothness constants $\alpha,L>0$.
\item $\cL(\cdot\,;Z_1^n)$ has the cPGB $\Phi(S')$
with respect to $(T_{t-1},T_t)$, at the point $\btheta^*$ and sparsity level $S'$.
\end{enumerate}
Define
\[\gamma=\sqrt{\tfrac{(d_{\max}-1)(2s^*+\sqrt{S}+1)+1}
{S-2s^*-\sqrt{S}}}, \quad \Gamma=(1+\gamma)\sqrt{1-\tfrac{\alpha}{L}}, \quad \Lambda=\tfrac{1}{1-\Gamma}\left(\tfrac{4(1+\gamma)}{\alpha}\cdot\Phi(S')+\delta\sqrt{p}\right),\]
and suppose $\kappa$ is large enough such that $S>\sqrt{S}+2s^*$ and $\Gamma<1$.
Take $\eta=\frac{1}{L}$, $\btheta_0=0$, and $-\Delta_{\min},\Delta_{\max} \geq \frac{1}{L}\|\nabla \cL(\btheta^*;Z_1^n)\|_\infty+3\|\btheta^*\|_2+2\Lambda$ in tree-PGD. Then the $\tau^\text{th}$ iterate $\btheta_\tau$ of tree-PGD satisfies
\begin{align*}
\|\btheta_{\tau}-\btheta^*\|_2 \leq \Gamma^{\tau}\cdot\|\btheta^*\|_2+\Lambda.
\end{align*}
\end{theorem}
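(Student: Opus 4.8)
The plan is to establish a one-step contraction of the form
\[\|\btheta_t - \btheta^*\|_2 \le \Gamma\,\|\btheta_{t-1}-\btheta^*\|_2 + \Big(\tfrac{4(1+\gamma)}{\alpha}\Phi(S')+\delta\sqrt p\Big),\]
valid for every $1 \le t \le \tau$, and then to iterate it. Since $\Gamma<1$ by assumption, unrolling this recursion from $\btheta_0=\0$ and summing the resulting geometric series gives $\|\btheta_\tau-\btheta^*\|_2 \le \Gamma^\tau\|\btheta^*\|_2 + \Lambda$, because $\Lambda = \frac{1}{1-\Gamma}(\frac{4(1+\gamma)}{\alpha}\Phi(S')+\delta\sqrt p)$ is exactly the sum of the additive errors. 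So essentially all of the work is in the single-step bound, which I would prove by induction on $t$, maintaining alongside it the invariant $\|\btheta_{t}-\btheta^*\|_2 \le \|\btheta^*\|_2+\Lambda$.

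The single-step bound splits into two pieces. First, a \emph{projection} piece: the iterate $\btheta_t$ minimizes the distance to $\u_t := \btheta_{t-1}-\tfrac1L\nabla\cL(\btheta_{t-1})$ over the discretized set $\{\btheta\in\Delta^p:\|\nabla_{T_t}\btheta\|_0\le S\}$, whereas $\btheta^*$ is $2s^*$-gradient-sparse over $T_t$ by Lemma \ref{lem:tree}, hence a feasible competitor (up to grid rounding). I would invoke Lemma \ref{lem:key}, the tree analogue of the comparison lemma of \citep{jain2014iterative}, to upgrade the fact that $\btheta_t$ is a better $S$-gradient-sparse approximant of $\u_t$ than $\btheta^*$ into the contraction-friendly bound
\[\|\btheta_t-\btheta^*\|_2 \le (1+\gamma)\,\|\bP_K(\u_t-\btheta^*)\|_2+\delta\sqrt p,\]
where $K$ is the span of the subspaces associated to the partitions of $\btheta_t$ and $\btheta^*$ over $T_t$ together with the partition of $\btheta_{t-1}$ over $T_{t-1}$; the $\delta\sqrt p$ term is the cost of rounding $\btheta^*$ onto $\Delta^p$, and the exponent $\tfrac12$ in $\gamma$ and the slack $\max(\sqrt S,d_{\max})$ in $S'$ are precisely what the tree version costs relative to the coordinate-sparse case. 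The point to verify is that $K$ has gradient-sparsity at most $S'$ over each of $T_{t-1}$ and $T_t$; this follows from $\|\nabla_{T_t}\btheta_t\|_0\le S$, $\|\nabla_{T_t}\btheta^*\|_0\le 2s^*$, $\|\nabla_{T_{t-1}}\btheta_{t-1}\|_0\le S$, and the extra $\max(\sqrt S,d_{\max})$ budget from Lemma \ref{lem:key}.

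Second, a \emph{convex-optimization} piece bounding $\|\bP_K(\u_t-\btheta^*)\|_2$ via Definitions \ref{def:cRSC_cRSS} and \ref{def:ProjGrad}. Writing
\[\bP_K(\u_t-\btheta^*) = \Big[(\btheta_{t-1}-\btheta^*)-\tfrac1L\bP_K\big(\nabla\cL(\btheta_{t-1})-\nabla\cL(\btheta^*)\big)\Big]-\tfrac1L\bP_K\nabla\cL(\btheta^*),\]
and using $\btheta_{t-1}-\btheta^*\in K$, the bracketed term is one projected-gradient step applied to the pair $(\btheta_{t-1},\btheta^*)$. Adding the two cRSC inequalities gives strong monotonicity and cRSS gives co-coercivity for the restriction $\cL|_K$, which are exactly the facts needed for the standard estimate $\|(\btheta_{t-1}-\btheta^*) - \tfrac1L\bP_K(\nabla\cL(\btheta_{t-1})-\nabla\cL(\btheta^*))\|_2 \le \sqrt{1-\alpha/L}\,\|\btheta_{t-1}-\btheta^*\|_2$ at step size $\eta=1/L$. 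The second term is controlled directly by the cPGB, $\tfrac1L\|\bP_K\nabla\cL(\btheta^*)\|_2\le \Phi(S')$, which applies since $K$ has sparsity $\le S'$. Combining with the projection estimate produces the contraction factor $(1+\gamma)\sqrt{1-\alpha/L}=\Gamma$, and the residual noise term $\tfrac{1+\gamma}{L}\Phi(S')$ I obtain is dominated by the stated (larger) coefficient $\tfrac{4(1+\gamma)}{\alpha}\Phi(S')$ since $\alpha\le L$.

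The step I expect to be most delicate is justifying the discretization throughout the induction: the range condition on $\Delta_{\min},\Delta_{\max}$ must guarantee that at every iteration the grid is wide enough for a near-optimal $S$-gradient-sparse approximant of $\u_t$ to lie in $\Delta^p$, so that $\btheta_t$ incurs only the $\delta\sqrt p$ rounding error and is not clipped. I would handle this inside the induction: the invariant $\|\btheta_{t-1}-\btheta^*\|_2\le\|\btheta^*\|_2+\Lambda$ bounds $\|\btheta_{t-1}\|_\infty$, and hence $\|\u_t\|_\infty$, in terms of $\|\btheta^*\|_2$, $\tfrac1L\|\nabla\cL(\btheta^*)\|_\infty$, and $\Lambda$, which is exactly why $-\Delta_{\min},\Delta_{\max}$ are required to exceed $\tfrac1L\|\nabla\cL(\btheta^*)\|_\infty+3\|\btheta^*\|_2+2\Lambda$. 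The main obstacle overall is thus the combined bookkeeping that simultaneously (i) keeps the merged subspace $K$ within sparsity $S'$ so that both the cRSC/cRSS and cPGB hypotheses apply, and (ii) propagates the $\ell_\infty$-boundedness validating the grid projection; once these are in place, the contraction and its iteration are routine.
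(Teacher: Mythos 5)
Your overall architecture matches the paper's proof: the same one-step recursion unrolled to a geometric series, the same split into a projection step (Lemma \ref{lem:key} adapted to the grid) and a restricted convex-optimization step, and the same inductive propagation of an $\ell_\infty$ bound on $\u_t$ to justify the grid range condition on $(\Delta_{\min},\Delta_{\max})$.

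Where you genuinely differ is the convex-optimization step, and your route is valid and in fact sharper. The paper introduces the constrained minimizer $\v=\argmin_{\btheta\in K}\cL(\btheta;Z_1^n)$ and goes through the gradient-mapping Lemma \ref{lem:main_step2}: part (a) gives $\|\bP_K\u_t-\v\|_2\le\sqrt{1-\alpha/L}\,\|\btheta_{t-1}-\v\|_2$, and part (b) together with cPGB gives $\|\v-\btheta^*\|_2\le(2/\alpha)\Phi(S')$, producing the coefficient $\tfrac{4(1+\gamma)}{\alpha}$ after two triangle inequalities. You instead contract the two-point gradient map directly between $\btheta_{t-1}$ and $\btheta^*$: strong monotonicity (sum of the two cRSC inequalities) plus co-coercivity of $\cL$ restricted to $K$ give $\|(\btheta_{t-1}-\btheta^*)-\tfrac1L\bP_K(\nabla\cL(\btheta_{t-1})-\nabla\cL(\btheta^*))\|_2\le\sqrt{1-\alpha/L}\,\|\btheta_{t-1}-\btheta^*\|_2$, and cPGB handles $\tfrac1L\|\bP_K\nabla\cL(\btheta^*)\|_2\le\tfrac1L\Phi(S')$. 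This is legitimate because cRSC/cRSS hold for \emph{all} pairs in $K$, so the auxiliary points in the standard co-coercivity proof stay inside $K$. It avoids $\v$ entirely and yields noise coefficient $\tfrac{1+\gamma}{L}\le\tfrac{4(1+\gamma)}{\alpha}$, hence implies the stated theorem; a side benefit is that your $\cP_1$ need only be the partition induced by $\btheta_{t-1}$ over $T_{t-1}$, whereas the paper must refine $\cP_1$ by $\btheta^*$'s partition so that $\btheta^*\in K_1$ when invoking Lemma \ref{lem:main_step2}(b).

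Two points in your sketch are misdescribed and would fail as literally stated. First, the discretization: the $\delta\sqrt p$ term is \emph{not} ``the cost of rounding $\btheta^*$ onto $\Delta^p$.'' The optimality comparison inside Lemma \ref{lem:key} is against the off-grid hybrid vectors $\v^b$ (equal to $\btheta^*$ on $V_b$ and to $\bP_K\u_t$ elsewhere); rounding $\v^b$ or $\btheta^*$ to the grid produces a cross term of order $\delta\sqrt{p}\,\|\u_t-\v^b\|_2$, which is uncontrolled at that stage since it includes the unbounded component $\|\bP_K^\perp\u_t\|_2$. The paper's Lemma \ref{lem:main_step1} instead compares to the continuous projection $\check{\u}^S$, rounds \emph{that}, and exploits the orthogonality of the residual $\u_t-\check{\u}^S$ to $\check{\u}^S-\check{\u}^S_\Delta$ (since $\check{\u}^S$ is a piecewise average of $\u_t$) to get the clean additive $p\delta^2$. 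Second, the $d_{\max}$ slack in $S'$ does not come from Lemma \ref{lem:key} (the lemma only needs the $\sqrt S$ slack); it is required for exactly the step you defer, namely bounding $\tfrac1L\|\nabla\cL(\btheta_{t-1})\|_\infty$. Since smoothness is only restricted, the paper augments $K_{t-1}+K^*$ with $\operatorname{span}(\e_i)$ --- costing $\|\nabla_{T_{t-1}}\e_i\|_0\le d_{\max}$ --- and applies cRSS on that augmented subspace to obtain $\tfrac1L\|\nabla\cL(\btheta_{t-1})\|_\infty\le\tfrac1L\|\nabla\cL(\btheta^*)\|_\infty+\|\btheta_{t-1}-\btheta^*\|_2$; without this device the inductive grid-range invariant does not close. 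Relatedly, your subspace $K$ must contain the partition \emph{constructed} by Lemma \ref{lem:key} (with its extra $\le\sqrt S$ cuts), not merely the spans of the three induced partitions, so that the monotonicity step $\|\bP_{K_2}\u_t-\btheta^*\|_2\le\|\bP_K\u_t-\btheta^*\|_2$ applies; your sparsity-budget accounting suggests you intend this, but it should be stated.
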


Note that since $\gamma \to 0$ as $\kappa \to \infty$, for any value $\alpha/L \in (0,1]$, there is a choice of constant $\kappa \equiv \kappa(\alpha,L)$ sufficiently large to ensure $\Gamma<1$.

\begin{comment}
We will apply Theorem \ref{thm:main} to two examples in Section \ref{sec:examples}. In particular, we will show that under suitable settings of the algorithm parameters, the estimator $\btheta_{\tau}$ achieves the squared-error guarantee \eqref{eq:mainguarantee} for gradient-sparse linear regression.
\end{comment}

\subsection{Proof overview}

The proof of Theorem \ref{thm:main} adopts an induction argument. For simplicity, let us suppose here that $\btheta_t$ exactly minimizes (\ref{eq:approxproj}). Then for each iteration, we wish to prove
\begin{align}\label{eq:main_idea}
\|\btheta_t-\btheta^*\|_2\leq\Gamma\cdot\|\btheta_{t-1}-\btheta^*\|_2+\tfrac{4(1+\gamma)}{\alpha}\cdot\Phi(S').
\end{align}

The proof of (\ref{eq:main_idea}) contains two main steps. First, we construct a subspace $K$ which contains $\btheta_t$ and $\btheta^*$ and write $\|\btheta_t-\btheta^*\|_2 \leq \|\bP_K\u_t-\btheta_t\|_2+\|\bP_K\u_t-\btheta^*\|_2$. Using the following key lemma, we show that there exists such a subspace $K$ for which $\|\bP_K\u_t-\btheta_t\|_2 \leq \gamma \|\bP_K\u_t-\btheta^*\|_2$, and the vectors in $K$ have gradient-sparsity not much larger than $S+s^*$.

\begin{lemma}\label{lem:key}
Let $T$ be a tree on the vertices $\{1,\ldots,p\}$ with maximum vertex degree
$d_{\max}$. Let $s^*>0$ and $S=\kappa s^*$,
where $\kappa>1$ and $S>\sqrt{S}+s^*$. Let $\u \in \R^p$ be arbitrary, let $\u^* \in \R^p$ be any vector satisfying $\|\nabla_T \u^*\|_0 \leq s^*$, and set
\[\u^S=\argmin_{\btheta \in \R^p:\|\nabla_T \btheta\|_0 \leq S} \|\u-\btheta\|_2.\]
Denote by $(K^S,K^*)$ the subspaces associated to the partitions induced by
$(\u^S,\u^*)$ over $T$.
Then there exists a partition $\cP$ of $\{1,\ldots,p\}$ with associated subspace
$K$, such that $K$ contains $K^S+K^*$,
\begin{equation}\label{eq:Psparsity}
|\partial_T \cP| \leq S+s^*+\sqrt{S},
\end{equation}
and the orthogonal projection $\bP_K\u$ of $\u$ onto $K$ satisfies
\begin{equation}\label{eq:projcomparison}
\|\bP_K\u-\u^S\|_2^2 \leq \frac{(d_{\max}-1)(s^*+\sqrt{S}+1)+1}
{S-s^*-\sqrt{S}}\,\|\bP_K\u-\u^*\|_2^2.
\end{equation}
\end{lemma}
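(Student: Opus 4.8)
The plan is to reduce the projection inequality (\ref{eq:projcomparison}) to a purely combinatorial comparison of approximation residuals over nested cut-sets on $T$, and then to extract the constant from the optimality of $\u^S$ by an exchange argument. Write $A=\partial_T\cP^S$ and $B=\partial_T\cP^*$ for the cuts induced by $\u^S,\u^*$, so $|A|\le S$, $|B|\le s^*$, and take $\cP$ to be their common refinement, with cut-set $A\cup B$. Then $K=V_{A\cup B}$ (the subspace constant on the components of $T$ after deleting $A\cup B$) contains both $K^S=V_A$ and $K^*=V_B$, and $|\partial_T\cP|\le|A|+|B|\le S+s^*$, giving (\ref{eq:Psparsity}) with the $\sqrt S$ as slack. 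For a cut-set $C$ let $R(C)=\|\u-\bP_{V_C}\u\|_2^2$. Because $\u^S=\bP_{K^S}\u$ is the piecewise mean over its own partition and $K^S\subseteq K$, Pythagoras gives $\|\bP_K\u-\u^S\|_2^2=R(A)-R(A\cup B)$; because $\u^*\in K^*\subseteq K$, orthogonality gives $\|\bP_K\u-\u^*\|_2^2\ge R(B)-R(A\cup B)$. Hence (\ref{eq:projcomparison}) follows once I establish the residual inequality $R(A)-R(A\cup B)\le c\,[R(B)-R(A\cup B)]$, with $c$ the claimed constant.

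Next I would invoke the optimality of $\u^S$. First dispose of the degenerate case $|A|<S$ separately, so that I may assume $|A|=S$ and thus $|A\setminus B|\ge S-s^*$. The quantity $R(A)-R(A\cup B)$ is the gain from adjoining the $b:=|B\setminus A|\le s^*$ cuts of $B\setminus A$ to $A$. To bound it, I remove any $b$ cuts $D\subseteq A\setminus B$ from the refined set $A\cup B$: the resulting set $(A\cup B)\setminus D$ has at most $S$ cuts, so optimality of $\u^S$ yields $R((A\cup B)\setminus D)\ge R(A)$, i.e.
\[
R(A)-R(A\cup B)\le R\big((A\cup B)\setminus D\big)-R(A\cup B)=:\mathrm{cost}(D).
\]
Thus the gain is at most the cost of removing the cheapest admissible $D$, and the problem becomes: show $\min_D \mathrm{cost}(D)\le c\,[R(B)-R(A\cup B)]$, averaging over the $\ge S-s^*$ candidate cuts in $A\setminus B$.

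The combinatorial heart is to control $\mathrm{cost}(D)$. If removals were non-interacting, $\mathrm{cost}(D)=\sum_{e\in D}\rho(e)$ where $\rho(e)$ is the single-cut removal cost, and a diminishing-returns bound on the tree (the marginal value of a fixed cut only shrinks as the partition refines) gives $\sum_{e\in A\setminus B}\rho(e)\le R(B)-R(A\cup B)$; choosing $D$ to be the $b$ cheapest cuts then produces a ratio of order $b/(S-s^*)\approx s^*/S$, which is the leading behavior of $c$. Two genuine complications produce the remaining factors: (i) removing several cuts that share a common piece is not additive, and since a vertex of degree $\le d_{\max}$ can be incident to at most $d_{\max}-1$ cuts branching off a given piece, charging such a merge against the individual $\rho(e)$'s costs the multiplicative factor $(d_{\max}-1)$; and (ii) one cannot in general find $b$ cuts that are simultaneously cheap and pairwise non-interacting, so one must select a slightly enlarged independent sub-collection and absorb the discrepancy, which is the source of the $\sqrt S$ appearing in both the numerator and the denominator of $c$ (and the slack in (\ref{eq:Psparsity})).

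I expect step (ii) to be the main obstacle: the clean projection reduction and the exchange step are essentially forced, but simultaneously guaranteeing cheapness and non-interaction of the removed cuts on a bounded-degree tree — and doing so with the exact budget that yields $S-s^*-\sqrt S$ in the denominator — is where the precise constant must be fought for by a careful counting and averaging argument rather than by the structural identities of the first two paragraphs. A secondary point requiring care is the justification of the diminishing-returns inequality $\sum_{e\in A\setminus B}\rho(e)\le R(B)-R(A\cup B)$ in the generality needed, since any failure of exact submodularity for the residual functional on trees would have to be patched, plausibly via a Cauchy–Schwarz step that itself contributes to the $\sqrt S$ correction.
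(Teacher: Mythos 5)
Your reduction is sound as far as it goes: taking $K$ constant on the common refinement of $\cP^S$ and $\cP^*$, the Pythagorean identities $\|\bP_K\u-\u^S\|_2^2=R(A)-R(A\cup B)$ and $\|\bP_K\u-\u^*\|_2^2\ge R(B)-R(A\cup B)$, and the exchange step via optimality of $\u^S$ are all correct, and the exchange-with-enlarged-budget idea is indeed the heart of the paper's argument as well. But the combinatorial core of your plan rests on a false lemma. The diminishing-returns inequality $\sum_{e\in A\setminus B}\rho(e)\le R(B)-R(A\cup B)$ fails: take $T$ a path on three vertices, $\u=(1,0,1)$, $B=\emptyset$, and $A$ both edges. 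Then $R(B)-R(A\cup B)=2/3$, while each single-removal cost is $\rho(e)=1/2$, so the sum is $1>2/3$; the residual functional on tree partitions exhibits \emph{increasing}, not diminishing, returns. Worse, the joint cost of removing a set $D$ is not controlled by the single costs either: with $\u=(0,0,M)$ on a path and both edges cut (so $d_{\max}=2$ and your merge-charging factor $(d_{\max}-1)=1$ buys nothing), the two single-removal costs are $0$ and $M^2/2$, but removing both cuts costs $2M^2/3>M^2/2$. So ``pick the $b$ cheapest cuts'' cannot bound $\mathrm{cost}(D)$, and your patches (i)--(ii), which you yourself flag as speculative, do not supply a working mechanism. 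This is a genuine gap, not a technicality: the whole constant in (\ref{eq:projcomparison}) has to come from this step.

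The paper closes exactly this gap by a different device. Instead of removing cuts from $A\cup B$ one at a time, it greedily constructs an auxiliary partition $\cP'$ of the vertices into $B$ \emph{disjoint} connected regions $V_1,\ldots,V_B$ plus a remainder, each region containing at least $s^*+\sqrt{S}$ edges of $\partial_T\cP^S$, with
\[\frac{S-s^*-\sqrt{S}}{(d_{\max}-1)(s^*+\sqrt{S}+1)+1}\le B\le\sqrt{S};\]
the degree factor enters through the greedy score bound (a subtree cut off in reverse-BFS order can carry score at most $(d_{\max}-1)(s^*+\sqrt{S}+1)+1$, else a child's subtree would already have been removed), not through charging merged pieces. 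One then swaps $\u^*$ wholesale into a single region: the vector $\v^b$ equal to $\u^*$ on $V_b$ and to $\bP_K\u$ elsewhere is still $S$-gradient-sparse, because the $\ge s^*+\sqrt{S}$ boundary edges of $\cP^S$ inside $V_b$ pay for $\u^*$'s at most $s^*$ cuts plus the $\le\sqrt{S}$ region boundaries---this is also why the lemma's $K$ must refine $\cP'$ in addition to $\cP^S$ and $\cP^*$, which is where the $\sqrt{S}$ in (\ref{eq:Psparsity}) comes from, rather than your $K=V_{A\cup B}$. Optimality of $\u^S$ applied to each $\v^b$, together with the disjointness of the regions, makes the $B$ costs add \emph{exactly}: $\|\bP_K\u-\u^*\|_2^2\ge\sum_{b=1}^B\|\bP_K\u-\v^b\|_2^2\ge B\,\|\bP_K\u-\u^S\|_2^2$. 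No submodularity, no selection of non-interacting cuts, and no Cauchy--Schwarz correction is needed; the two $\sqrt{S}$ terms arise from balancing the per-region score threshold against the number of regions. If you want to salvage your route, you would need a substitute for additivity of removal costs, and the counterexamples above show none exists at the level of generality you assume.
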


Then, in the second step, we bound $\|\bP_K\u_t-\btheta^*\|_2$ by introducing $\v=\argmin_{\btheta \in K} \cL(\btheta;Z_1^n)$. Using a property of the gradient mapping (Lemma \ref{lem:main_step2}) and the cRSC and cRSS conditions, we show that $\|\bP_K\u_t-\v\|_2 \leq \sqrt{1-\alpha/L} \cdot \|\btheta_{t-1}-\v\|_2$. Applying the triangle inequality, this implies $\|\bP_K\u_t-\btheta^*\|_2 \leq \sqrt{1-\alpha/L} \cdot \|\btheta_{t-1}-\btheta^*\|_2+2\|\v-\btheta^*\|_2$. Finally, we show that $\|\v-\btheta^*\|_2 \leq (2/\alpha)\Phi(S')$ using the cRSC and cPGB properties of the loss, and combining gives (\ref{eq:main_idea}).

The use of Lemma \ref{lem:key} is inspired by an analogous argument of \citep{jain2014iterative} for coordinate-sparse parameter estimation. However, the analysis for coordinate-sparsity is simpler, due to a key structural property that if $\u^S$ and $\u^*$ are the best (coordinate-) $S$-sparse and $s^*$-sparse approximations of $\u$, then the sparse subspace of $\u^*$ is contained inside that of $\u^S$. This nested subspace structure does not hold for gradient-sparsity, and thus our proofs of both Lemma \ref{lem:key} and Theorem \ref{thm:main} follow different arguments from those of \citep{jain2014iterative}.

\section{Examples}\label{sec:examples}
\subsection{Gradient-Sparse Linear Regression}\label{sec:linearmodel}
Consider the example of $Z_i=(\x_i,y_i)$ satisfying a linear model
\begin{equation}\label{eq:linearmodel}
y_i=\x_i^\top \btheta^*+e_i
\end{equation}
for independent design vectors $\x_i \in \R^p$ and mean-zero residual errors $e_i$. Let us write this as $\y=\X\btheta^*+\e$
where $\y=(y_1,\ldots,y_n)$, $\e=(e_1,\ldots,e_n)$, and $\X \in \R^{n \times p}$ is the random design matrix with rows $\x_i^\top$. Then $\btheta^*$ is the minimizer of $\E[\cL(\btheta;Z_1^n)]$ for the squared-error loss
\[\cL(\btheta;Z_1^n)=\frac{1}{2n}\|\y-\X\btheta\|_2^2.\]
The gradient of the loss is given by $\nabla\cL(\btheta;Z_1^n)=\X^\top(\X\btheta-\y)/n$.

We assume that
\begin{equation}\label{eq:xconditions}
\Cov(\x_i)=\bSigma, 
\qquad \lambda_{\max}(\bSigma)=\lambda_1, \qquad \lambda_{\min}(\bSigma)=\lambda_p,
\qquad \|\x_i\|_{\psi_2}^2 \leq D \lambda_p
\end{equation}
\begin{equation}\label{eq:econditions}
\E[e_i]=0, \qquad \|e_i\|_{\psi_2}^2 \leq \sigma^2
\end{equation}
for constants $\lambda_1,\lambda_p,D,\sigma^2>0$, where $\|\cdot\|_{\psi_2}$ denotes the scalar or vector subgaussian norm. Then the cRSC, cRSS, and cPGB conditions hold according to the following proposition.

\begin{proposition}\label{thm:prop1}
Suppose (\ref{eq:xconditions}) and (\ref{eq:econditions}) hold, and let $S' \geq 1$. Define
\begin{equation}\label{eq:gsS}
g(S')=S'\log(1+\tfrac{p}{S'}).
\end{equation}
Let $T_1,\ldots,T_\tau$ be the trees generated by tree-PGD, and let $T_0=T_1$. For any $k>0$, and some constants $C_1,C_2,C_3>0$ depending only on $k$ and $D$, if
\[n \geq C_1g(S')\]
then with probability at least $1-\tau\cdot p^{-k}$, for every $1 \leq t \leq \tau$,
\begin{enumerate}
\item $\cL(\cdot\,;Z_1^n)$ satisfies cRSC and cRSS with respect to $(T_{t-1},T_t)$ at sparsity level $S'$ and with convexity and smoothness constants $\alpha=\lambda_p/2$ and $L=3\lambda_1/2$.
\item $\cL(\cdot\,;Z_1^n)$ has the cPGB
\[\Phi(S')=C_2\sigma\sqrt{\lambda_1 g(S')/n}\]
with respect to $(T_{t-1},T_t)$, at $\btheta^*$ and sparsity level $S'$.
\item $\|\nabla \cL(\btheta^*;Z_1^n)\|_\infty \leq C_3\sigma\sqrt{(\lambda_1 \log p)/n}$.
\end{enumerate}
\end{proposition}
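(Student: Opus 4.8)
The plan is to establish the three conclusions separately, exploiting that the squared-error loss is quadratic: its Hessian is the fixed matrix $\X^\top\X/n$, and its gradient at the truth is $\nabla\cL(\btheta^*;Z_1^n)=\X^\top(\X\btheta^*-\y)/n=-\X^\top\e/n$. The one genuinely new ingredient, shared by the first two parts, is a combinatorial count of the relevant subspaces. In Definitions \ref{def:cRSC_cRSS} and \ref{def:ProjGrad} the subspace $K=K_1+K_2$ is indexed by a pair of tree-partitions with boundary at most $S'$; since a partition of a $p$-vertex tree with boundary $\leq S'$ is specified by choosing at most $S'$ of its $p-1$ edges to cut, there are at most $\binom{p-1}{\leq S'}^2$ admissible $K$ for a fixed pair $(T_{t-1},T_t)$, each with $\dim K\leq 2(S'+1)$. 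Because $\log\binom{p-1}{\leq S'}\lesssim S'\log(ep/S')\asymp g(S')$, the metric entropy of this union of subspaces is $\Theta(g(S'))$, which is precisely what the sample-size hypothesis $n\geq C_1 g(S')$ is meant to dominate.

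For the cRSC/cRSS bound, note that since $\cL$ is quadratic, $\cL(\btheta_2)-\cL(\btheta_1)-\langle\btheta_2-\btheta_1,\nabla\cL(\btheta_1)\rangle=\frac{1}{2n}\|\X(\btheta_2-\btheta_1)\|_2^2$, and as $\btheta_2-\btheta_1$ ranges over the subspace $K$, the inequalities (\ref{def:cRSC})--(\ref{def:cRSS}) are equivalent to the restricted eigenvalue bounds $\alpha\|\v\|_2^2\leq\frac1n\|\X\v\|_2^2\leq L\|\v\|_2^2$ for all $\v\in K$. I would prove the uniform two-sided deviation $\sup_{\v\in K,\,\|\v\|_2=1}\big|\frac1n\|\X\v\|_2^2-\v^\top\bSigma\v\big|\leq\lambda_p/2$ over every admissible $K$; combined with $\lambda_p\|\v\|_2^2\leq\v^\top\bSigma\v\leq\lambda_1\|\v\|_2^2$ this gives $\alpha=\lambda_p/2$ and $L=\lambda_1+\lambda_p/2\leq 3\lambda_1/2$. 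The argument is the standard net-plus-Bernstein scheme: on a $1/4$-net of the unit sphere of $K$ (of size $\leq 9^{2(S'+1)}$) each $\frac1n\|\X\v\|_2^2$ is an average of independent subexponential variables $(\x_i^\top\v)^2$ with mean $\v^\top\bSigma\v$ and subexponential norm $\lesssim\|\x_i\|_{\psi_2}^2\leq D\lambda_p$, so Bernstein gives a deviation $\leq\lambda_p/4$ at each net point with failure probability $\exp(-cn/D^2)$, and the net-to-sphere comparison for symmetric quadratic forms upgrades this to the operator-norm bound $\lambda_p/2$. A union bound over the net and over all $\binom{p-1}{\leq S'}^2$ subspaces contributes a factor $\exp(Cg(S'))$, so for $n\geq C_1 g(S')$ with $C_1=C_1(k,D)$ large and using $g(S')\geq\log(1+p)\geq\log p$, the total failure probability is at most $p^{-k}$. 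I expect this to be the main obstacle: matching the entropy count to $g(S')$ and carrying the $D$-dependent Bernstein constants cleanly through the union bound.

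For the cPGB, I would condition on $\X$ and work on the event $\mathcal E$ from the previous part, where $\|\X\w\|_2^2\leq\frac{3n\lambda_1}{2}\|\w\|_2^2$ holds for every $\w$ in every admissible $K$. Conditionally on $\X$, $\w^\top\nabla\cL(\btheta^*;Z_1^n)=-\frac1n(\X\w)^\top\e$ is a linear functional of the independent $\sigma^2$-subgaussian noise, hence subgaussian with variance proxy $\lesssim\sigma^2\|\X\w\|_2^2/n^2\leq\tfrac32\sigma^2\lambda_1\|\w\|_2^2/n$ on $\mathcal E$; i.e.\ for unit $\w\in K$ it is $(C\sigma^2\lambda_1)/n$-subgaussian. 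Applying Lemma \ref{lem:ProjGrad} to the conditional law of the noise, with its $\sigma^2$ replaced by $C\sigma^2\lambda_1$, yields the cPGB $\Phi(S')=C_2\sigma\sqrt{\lambda_1 g(S')/n}$ with conditional probability $\geq 1-p^{-k}$; integrating over $\mathcal E$ gives the unconditional claim. Since the uniformity over subspaces is already internal to Lemma \ref{lem:ProjGrad}, this step is mostly a bookkeeping of variance proxies.

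Finally, for the $\ell_\infty$ bound, the $j$-th coordinate of $\nabla\cL(\btheta^*;Z_1^n)$ is $-\frac1n\sum_i X_{ij}e_i$, which conditionally on $\X$ is subgaussian in $\e$ with variance proxy $\lesssim\sigma^2\|\X_{\cdot j}\|_2^2/n^2$. A one-sided Bernstein bound for the subexponential sum $\|\X_{\cdot j}\|_2^2=\sum_i X_{ij}^2$ gives $\|\X_{\cdot j}\|_2^2\leq Cn\lambda_1$ simultaneously over $j$ with probability $\geq 1-p^{-k}$ (using $n\gtrsim\log p$); on this event, taking $t=C_3\sigma\sqrt{\lambda_1\log p/n}$ and a union bound over the $p$ coordinates yields $\|\nabla\cL(\btheta^*;Z_1^n)\|_\infty\leq t$ with probability $\geq 1-p^{-k}$. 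To assemble the result I would union the first two events over the $\tau$ consecutive pairs $(T_{t-1},T_t)$, intersect with the single $\ell_\infty$ event, and absorb the finitely many constant multiples of $p^{-k}$ by enlarging $C_1,C_2,C_3$, obtaining the stated probability $1-\tau p^{-k}$.
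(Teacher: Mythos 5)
Your proposal is correct and follows essentially the same route as the paper's proof: the quadratic-loss reduction to restricted eigenvalue bounds over the at most $e^{2g(S')}$ admissible subspaces, conditioning on the resulting event so that $\w^\top\X^\top\e/n$ is subgaussian with variance proxy of order $\sigma^2\lambda_1/n$ and invoking Lemma \ref{lem:ProjGrad} for the cPGB, the coordinatewise version of the same argument for the $\ell_\infty$ bound, and a union bound over the $\tau$ tree pairs. The only cosmetic differences are that you write out the net-plus-Bernstein concentration by hand (and keep scale-invariance in $\lambda_p$ explicit) where the paper rescales to $\lambda_p=1$ and cites Eq.\ (5.25) and Lemma 5.9 of \citep{vershynin2010introduction}.
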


Applying this and Theorem \ref{thm:main}, we obtain the following immediate corollary.

\begin{corollary}\label{cor:prop1}
Suppose (\ref{eq:xconditions}) and (\ref{eq:econditions}) hold, and $\|\nabla_G \btheta^*\|_0 \leq s^*$ and $\|\btheta^*\|_2 \leq c_0\sqrt{p}$ for some $s^* \geq 1$ and $c_0>0$. Set $S=c_1(\lambda_1/\lambda_p)^2s^*$, $\eta=2/(3\lambda_1)$, $\omega=\sigma\lambda_1^{3/2}/\lambda_p^2$, $-\Delta_{\min}=\Delta_{\max}=c_2(\sqrt{p}+\omega\sqrt{(s^*\log p)/n})$, $\delta=\omega\sqrt{s^*/np}$, and $\tau=c_3\log(np/\omega^2s^*)$ in tree-PGD, for sufficiently large constants $c_1>0$ depending on $d_{\max},D$ and $c_2,c_3>0$ depending on $d_{\max},D,c_0$.

Then for any $k>0$ and some constants $C_1,C_2>0$ depending only on $k,d_{\max},D$, if $n \geq C_1(\lambda_1/\lambda_p)^2s^*\log(1+p/s^*)$, then with probability at least $1-\tau \cdot p^{-k}$,
\begin{align*}
\|\btheta_\tau-\btheta^*\|_2^2 \leq C_2 \cdot \frac{\sigma^2\lambda_1^3}{\lambda_p^4} \cdot \frac{s^*}{n}\log\Big(1+\frac{p}{s^*}\Big).
\end{align*}
\end{corollary}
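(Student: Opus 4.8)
The plan is to obtain Corollary~\ref{cor:prop1} as a direct combination of the probabilistic Proposition~\ref{thm:prop1}, which supplies the cRSC, cRSS, and cPGB constants for the squared-error loss, with the deterministic guarantee of Theorem~\ref{thm:main}. First I would condition on the high-probability event furnished by Proposition~\ref{thm:prop1}, which holds with probability at least $1-\tau\cdot p^{-k}$ provided $n \geq C_1 g(S')$. Since $S=c_1(\lambda_1/\lambda_p)^2 s^*$ and $S'=S+2s^*+\max(\sqrt{S},d_{\max})$, one has $S' \asymp (\lambda_1/\lambda_p)^2 s^*$, so that $g(S') \asymp (\lambda_1/\lambda_p)^2 s^*\log(1+p/s^*)$; the displayed sample-size hypothesis $n \geq C_1(\lambda_1/\lambda_p)^2 s^*\log(1+p/s^*)$ is then exactly what is needed to invoke the proposition. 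On this event its three items furnish precisely the structural hypotheses (1) and (2) of Theorem~\ref{thm:main}, with $\alpha=\lambda_p/2$, $L=3\lambda_1/2$, and $\Phi(S')=C_2\sigma\sqrt{\lambda_1 g(S')/n}$.

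Second, I would verify the remaining quantitative conditions of Theorem~\ref{thm:main} under the stated parameter choices, and this is where the main work lies. With $\alpha/L=\lambda_p/(3\lambda_1)$, I would bound $\gamma$ using $S=\kappa s^*$, $\kappa=c_1(\lambda_1/\lambda_p)^2$: the dominant behavior is $\gamma^2 \lesssim (d_{\max}-1)/\kappa$, so $\gamma \lesssim \sqrt{(d_{\max}-1)/c_1}\cdot(\lambda_p/\lambda_1)$. Choosing $c_1$ large enough (depending only on $d_{\max}$) then forces $\gamma$ to be a small multiple of $\alpha/L$, which both ensures $S>\sqrt{S}+2s^*$ and yields $\Gamma=(1+\gamma)\sqrt{1-\alpha/L}<1$ together with the quantitative lower bound $1-\Gamma \gtrsim \lambda_p/\lambda_1$. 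This is the crux of the argument: the factor $(\lambda_1/\lambda_p)^2$ in the projection sparsity is calibrated so that the combinatorial slack $\gamma$ coming from Lemma~\ref{lem:key} is dominated by the conditioning gap $\alpha/L$.

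Third, with $\Gamma<1$ established I would apply Theorem~\ref{thm:main} to obtain $\|\btheta_\tau-\btheta^*\|_2 \leq \Gamma^\tau\|\btheta^*\|_2+\Lambda$, then control each term. For $\Lambda=\tfrac{1}{1-\Gamma}\big(\tfrac{4(1+\gamma)}{\alpha}\Phi(S')+\delta\sqrt{p}\big)$, I would substitute $\alpha=\lambda_p/2$, $1/(1-\Gamma)\lesssim \lambda_1/\lambda_p$, $(1+\gamma)=O(1)$, the expression for $\Phi(S')$, and $\delta\sqrt{p}=\omega\sqrt{s^*/n}$ with $\omega=\sigma\lambda_1^{3/2}/\lambda_p^2$; the value of $\delta$ is chosen precisely so that the discretization contribution $\tfrac{1}{1-\Gamma}\delta\sqrt{p}$ does not exceed the statistical contribution from $\Phi(S')$, after which the condition-number factors combine into the displayed prefactor and give $\Lambda^2 \lesssim \tfrac{\sigma^2\lambda_1^3}{\lambda_p^4}\cdot\tfrac{s^*}{n}\log(1+\tfrac{p}{s^*})$. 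I would also confirm here that the prescribed range $-\Delta_{\min}=\Delta_{\max}=c_2(\sqrt{p}+\omega\sqrt{(s^*\log p)/n})$ dominates $\tfrac{1}{L}\|\nabla\cL(\btheta^*;Z_1^n)\|_\infty+3\|\btheta^*\|_2+2\Lambda$, using item~3 of Proposition~\ref{thm:prop1}, the hypothesis $\|\btheta^*\|_2\leq c_0\sqrt{p}$, and the bound on $\Lambda$ just derived.

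Finally, I would dispatch the transient term: since $\Gamma \leq 1-\Theta(\lambda_p/\lambda_1)$, we have $\Gamma^\tau\|\btheta^*\|_2 \leq e^{-\tau(1-\Gamma)}c_0\sqrt{p}$, and taking $\tau=c_3\log(np/\omega^2 s^*)$ with $c_3$ large enough makes this at most $\Lambda$; combining yields $\|\btheta_\tau-\btheta^*\|_2\leq 2\Lambda$, and squaring gives the claim. The step I expect to be the main obstacle is the consistent tracking of the $\lambda_1$- and $\lambda_p$-dependence through $\gamma$, $\Gamma$, and $\Lambda$ at once: the oversampling $\kappa\propto(\lambda_1/\lambda_p)^2$ is forced by the requirement $\Gamma<1$, the same oversampling inflates $g(S')$, and $1/(1-\Gamma)\asymp\lambda_1/\lambda_p$ enters $\Lambda$, so one must check that these effects combine into the stated rate governed by $s^*$ rather than by the inflated sparsity $S'$.
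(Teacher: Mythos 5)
Your proposal is correct and coincides with the paper's own (essentially unwritten) argument: the paper obtains Corollary~\ref{cor:prop1} as an ``immediate'' combination of Proposition~\ref{thm:prop1} --- which on an event of probability $1-\tau\cdot p^{-k}$ supplies cRSC/cRSS with $\alpha=\lambda_p/2$, $L=3\lambda_1/2$, the cPGB $\Phi(S')=C_2\sigma\sqrt{\lambda_1 g(S')/n}$, and the $\ell_\infty$ gradient bound --- with Theorem~\ref{thm:main}, exactly along the route you lay out (checking $n\geq C_1 g(S')$, calibrating $\gamma$ and $\Gamma$ via $\kappa=c_1(\lambda_1/\lambda_p)^2$, bounding $\Lambda$ with $\delta\sqrt{p}=\omega\sqrt{s^*/n}$, verifying the $\Delta$-range via item~3 and $\|\btheta^*\|_2\leq c_0\sqrt{p}$, and killing the transient with $\tau$). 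One caution: your intermediate claim $\gamma^2\lesssim (d_{\max}-1)/\kappa$ drops the $\sqrt{S}$ term in the numerator of $\gamma^2$ --- in general one only gets $\gamma^2\lesssim d_{\max}\bigl(1/\kappa+1/\sqrt{\kappa s^*}\bigr)$, so for $s^*$ small relative to $(\lambda_1/\lambda_p)^2$ the calibration $\gamma\lesssim \alpha/L$ with $c_1$ independent of $\lambda_1/\lambda_p$ is not literally available --- but this imprecision (like the unverified bookkeeping of the $1/(1-\Gamma)\asymp\lambda_1/\lambda_p$ and $g(S')\asymp(\lambda_1/\lambda_p)^2 g(s^*)$ factors into the stated prefactor $\sigma^2\lambda_1^3/\lambda_p^4$) is inherited from the corollary's own statement rather than introduced by your argument.
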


\subsection{Gradient-Sparse GLM}
Consider the example of $Z_i=(\x_i,y_i)$ satisfying a generalized linear model (GLM)
\begin{align*}
P(y_i|\x_i,\btheta^*,\phi)=\exp\Big\{ \frac{y_i \x_i^{\T}\btheta^*-b(\x_i^{\T}\btheta^*)}{\phi}\Big\}\cdot h(y_i,\phi)
\end{align*}
for independent design vectors $\x_i \in \R^p$. Here $\phi>0$ is a constant scale parameter, and $h$ and $b$ are the base measure and cumulant function of the exponential family, where $\E(y_i|\x_i)=b'(\x_i^\top \btheta^*)$. Then $\btheta^*$ minimizes the population risk $\E[\cL(\btheta;Z_1^n)]$ for the negative log-likelihood loss
\[\cL(\btheta;Z_1^n)=\frac{1}{n}\sum_{i=1}^n \Big(b(\x_i^\top\btheta)-y_i\x_i^\top \btheta\Big).\]
The gradient of this loss is $\nabla\cL(\btheta;Z_1^n)=\frac{1}{n}\sum_{i=1}^n (b'(\x_i^\top\btheta)-y_i)\x_i$.

Let us assume that (\ref{eq:xconditions}) holds for the design vectors $\x_i$. Setting $e_i=y_i-b'(\x_i^\top \btheta^*)$, let us assume also that for some constants $\alpha_b,L_b,D_1,D_2>0$ and $\beta \in [1,2]$,
\begin{equation}\label{eq:GLMbcondition}
\frac{\alpha_{b}}{2}(x_2-x_1)^2\leq b(x_2)-b(x_1)-b'(x_1)(x_2-x_1)\leq\frac{L_{b}}{2}(x_2-x_1)^2 \quad \text{ for all } x_1,x_2 \in \R,
\end{equation}
\begin{equation}\label{eq:GLMecondition}
\P(|e_i|>\zeta)\leq D_1\exp(-D_2 \zeta^\beta) \quad \text{ for all } \zeta>0.
\end{equation}
Then the cRSC, cRSS, and cPGB conditions hold according to the following proposition.

\begin{proposition}\label{thm:prop2}
Suppose that (\ref{eq:xconditions}), (\ref{eq:GLMbcondition}), and (\ref{eq:GLMecondition}) hold. Let $S' \geq 1$ and $g(S')$ be as in (\ref{eq:gsS}). Let $T_1,\ldots,T_\tau$ be the trees generated by tree-PGD, and let $T_0=T_1$. For any $k>0$ and some constants $C_1,C_2,C_3>0$ depending only on $k,D,D_1,D_2,\beta$, if $n \geq C_1g(S')$, then with probability at least $1-\tau\cdot p^{-k}$, for every $1 \leq t \leq \tau$,
\begin{enumerate}
\item $\cL(\cdot\,;Z_1^n)$ satisfies cRSC and cRSS with respect to $(T_{t-1},T_t)$ at sparsity levels $S'$ with convexity and smoothness constants $\alpha=\frac{\alpha_b\lambda_p}{2}$ and $L=\frac{3L_b\lambda_1}{2}$.
\item $\cL(\cdot\,;Z_1^n)$ has the cPGB 
\begin{align*}
\Phi(S')=\begin{cases}
C_2\sqrt{\lambda_1/n} \cdot g(S')^{1/\beta} & \text{ if } \quad 1<\beta \leq 2\\
C_2\log n\sqrt{\lambda_1/n} \cdot g(S') & \text{ if } \quad \beta=1\end{cases}
\end{align*}
with respect to $(T_{t-1},T_t)$, at $\btheta^*$ and sparsity level $S'$.
\item
$\displaystyle \|\nabla \cL(\btheta^*;Z_1^n)\|_\infty \leq
\begin{cases}
C_3(\log p)^{1/\beta}\sqrt{\lambda_1/n} & \text{ if } \quad 1<\beta \leq 2\\
C_3(\log n)(\log p)\sqrt{\lambda_1/n} & \text{ if } \quad \beta=1 \end{cases}$
\end{enumerate}
\end{proposition}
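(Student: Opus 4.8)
The three conclusions separate cleanly, and only the cut-projected gradient bound (item 2) and the sup-norm control (item 3) genuinely use the heavy-tailed error assumption (\ref{eq:GLMecondition}); the curvature statement (item 1) reduces to the same restricted-eigenvalue estimate already required for the linear model of Proposition \ref{thm:prop1}.

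For the cRSC/cRSS claim I would first record the exact second-order structure of the loss. Since $\nabla\cL(\btheta)=\frac1n\sum_i(b'(\x_i^\top\btheta)-y_i)\x_i$, for $\btheta_1,\btheta_2$ with $\v=\btheta_2-\btheta_1$ the Bregman remainder is $\cL(\btheta_2)-\cL(\btheta_1)-\langle\v,\nabla\cL(\btheta_1)\rangle=\frac1n\sum_i\big(b(\x_i^\top\btheta_2)-b(\x_i^\top\btheta_1)-b'(\x_i^\top\btheta_1)\x_i^\top\v\big)$. Applying (\ref{eq:GLMbcondition}) termwise with $x_1=\x_i^\top\btheta_1$, $x_2=\x_i^\top\btheta_2$ sandwiches this remainder between $\frac{\alpha_b}{2}\cdot\frac1n\|\X\v\|_2^2$ and $\frac{L_b}{2}\cdot\frac1n\|\X\v\|_2^2$. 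Thus cRSC and cRSS with constants $\alpha=\alpha_b\lambda_p/2$ and $L=3L_b\lambda_1/2$ follow once one has, uniformly over every subspace $K=K_1+K_2$ associated to tree-cut partitions with $|\partial_{T_{t-1}}\cP_1|,|\partial_{T_t}\cP_2|\le S'$ and every $\v\in K$, the restricted-eigenvalue bound $\frac{\lambda_p}{2}\|\v\|_2^2\le\frac1n\|\X\v\|_2^2\le\frac{3\lambda_1}{2}\|\v\|_2^2$. This is exactly the event established in the proof of Proposition \ref{thm:prop1} from the subgaussian design (\ref{eq:xconditions}), valid on $n\ge C_1 g(S')$, and I would reuse it verbatim; its derivation is the covering argument over the $\le\binom{p-1}{S'}^2$ tree-cut subspaces (each of dimension $\le 2S'+2$), whose log-cardinality is of order $g(S')=S'\log(1+p/S')$.

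For the cPGB, note $\nabla\cL(\btheta^*)=-\frac1n\sum_i e_i\x_i=:\g$ since $e_i=y_i-b'(\x_i^\top\btheta^*)$, so $\|\bP_K\nabla\cL(\btheta^*)\|_2=\sup_{\w\in K,\,\|\w\|_2=1}\frac1n\sum_i e_i(\x_i^\top\w)$. The plan is to condition on the design $\X$ (on the restricted-eigenvalue event above), under which the $e_i$ are independent, mean-zero, and sub-Weibull of order $\beta$ by (\ref{eq:GLMecondition}), and to view $\frac1n\sum_i e_i(\x_i^\top\w)=\sum_i a_i e_i$ as a weighted sum with coefficients $a_i=\x_i^\top\w/n$. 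On that event $\|\a\|_2^2=\frac1{n^2}\|\X\w\|_2^2\le\frac{3\lambda_1}{2n}$, while $\|\a\|_\infty$ is controlled by the subgaussian maxima $\max_i|\x_i^\top\w|$. A generalized Bernstein inequality for sums of independent sub-Weibull($\beta$) variables then gives a mixed tail: a subgaussian branch governed by $\|\a\|_2$ and a heavier branch with tail exponent $\beta$ governed by $\|\a\|_\infty$. A union bound over a net of the unit sphere of $K$ together with all tree-cut subspaces contributes an entropy of order $g(S')$ in the exponent; inverting the two branches produces deviations of order $\sqrt{\lambda_1 g(S')/n}$ and $\lesssim g(S')^{1/\beta}\sqrt{\lambda_1/n}$, and since $g(S')\ge1$ and $\beta\le2$ the heavier branch subsumes the other, giving $\Phi(S')\asymp\sqrt{\lambda_1/n}\,g(S')^{1/\beta}$. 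At the boundary $\beta=1$ the errors are only sub-exponential and the $\psi_1$ maximal summand cannot be absorbed, forcing an extra $\log n$ and yielding $\Phi(S')\asymp(\log n)\sqrt{\lambda_1/n}\,g(S')$. The sup-norm bound is the same computation with $\w$ ranging over the $p$ coordinate directions: each $\frac1n\sum_i e_i x_{ij}$ obeys the identical sub-Weibull tail, and a union bound over $j\in\{1,\dots,p\}$ replaces $g(S')$ by $\log p$, giving $(\log p)^{1/\beta}\sqrt{\lambda_1/n}$ for $1<\beta\le2$ and $(\log n)(\log p)\sqrt{\lambda_1/n}$ for $\beta=1$.

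I expect the main obstacle to be the heavy-tailed concentration in item 2, for two reasons. First, the summands $e_i(\x_i^\top\w)$ are products of a sub-Weibull($\beta$) variable and a subgaussian one, and the two sources are dependent through the model, so one must either condition on $\X$ cleanly—handling the residual fluctuations of $\|\a\|_2$ and $\|\a\|_\infty$—or invoke a product sub-Weibull bound, all while keeping the resulting entropy pinned at $g(S')$ rather than a naive order $p$, which is what preserves the rate. Second, the case $\beta=1$ sits exactly at the sub-exponential boundary where the maximal term is responsible for the extra $\log n$; isolating this factor without degrading the $1<\beta\le2$ rate is the delicate step. The curvature bound of item 1, by contrast, is essentially inherited from the linear-model analysis.
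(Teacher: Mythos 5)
Your proposal is correct and its architecture coincides with the paper's proof: item 1 via the termwise Bregman sandwich from (\ref{eq:GLMbcondition}), which traps the remainder between $\frac{\alpha_b}{2n}\|\X(\btheta_2-\btheta_1)\|_2^2$ and $\frac{L_b}{2n}\|\X(\btheta_2-\btheta_1)\|_2^2$, followed by verbatim reuse of the restricted-eigenvalue event (\ref{eq:prop1.1}) from Proposition \ref{thm:prop1}; items 2 and 3 via conditioning on $\X$ on that event, a covering net over the at most $e^{2g(S')}\cdot 5^{2S'+2}$ tree-cut subspaces (resp.\ the $p$ coordinate directions, with entropy $\log p$), and a heavy-tailed deviation bound for the weighted sums $\sum_i a_i e_i$ with $\|\a\|_2^2 \leq 3\lambda_1/(2n)$. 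The one substantive difference is the concentration tool. The paper invokes Lemma \ref{lem:aux1} (from Huang et al.), which bounds the full $\psi_\beta$ Orlicz norm of the weighted sum in terms of $\|\a\|_2$ alone, giving a single-branch tail $\exp\{-(\zeta/(K M(\sigma+C)))^\beta\}$ for $1<\beta\leq 2$ and building the $\log n$ factor at $\beta=1$ directly into the norm bound; you instead invoke a two-branch sub-Weibull Bernstein inequality whose heavy branch is governed by $\|\a\|_\infty$. Your route works, but it requires an extra step you only gesture at: a uniform high-probability bound of the form $\max_i|\x_i^\top\w| \lesssim \sqrt{\lambda_1(g(S')+\log n)}$ over the whole net, after which $\|\a\|_\infty\, g(S')^{1/\beta} \lesssim \sqrt{\lambda_1/n}\, g(S')^{1/\beta}$ under $n \geq C_1 g(S')$. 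The paper's choice of lemma eliminates this $\|\a\|_\infty$ bookkeeping entirely, which is precisely the ``delicate step'' you flagged; conversely, your two-branch bound is in principle sharper at $\beta=1$ (the $\log n$ there is an artifact of the Orlicz-norm route, entering through the maximal summand exactly as you diagnosed), though since the proposition only asserts an upper bound including $\log n$, either tool suffices. You also correctly identified the design--error dependence; the paper handles it just as you propose, conditioning on $\X$ and intersecting with the event $\mathcal{E}$ before applying the tail bound, and closing with a union bound over $1 \leq t \leq \tau$.
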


Under suitable settings of the tree-PGD parameters, similar to Corollary \ref{cor:prop1} and which we omit for brevity, when $n \geq C's^*\log(1+p/s^*)$, this yields the estimation rate
\[\|\btheta_\tau-\btheta^*\|_2^2 \leq C \cdot \frac{(s^* \log(1+p/s^*))^{2/\beta}}{n}\]
in models where $1<\beta \leq 2$, and this rate with an additional $(\log n)^2$ factor in models where $\beta=1$. (Here, these constants $C,C'$ depend on $\lambda_1,\lambda_p,D,D_1,D_2,\beta$.)

We note that this result may be established under a relaxed condition (\ref{eq:GLMbcondition}) that only holds over a sufficiently large bounded region for $x_1,x_2$, following a more delicate analysis and ideas of \citep{negahban2012unified}. For simplicity, we will not pursue this direction in this work.

\section{Simulations}\label{sec:simulations}

\begin{figure}
\begin{center}
\includegraphics[width=0.2\textwidth]{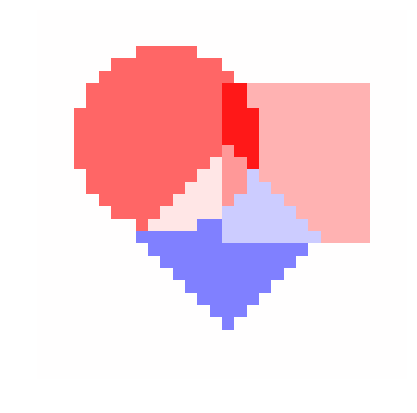}
\includegraphics[width=0.2\textwidth]{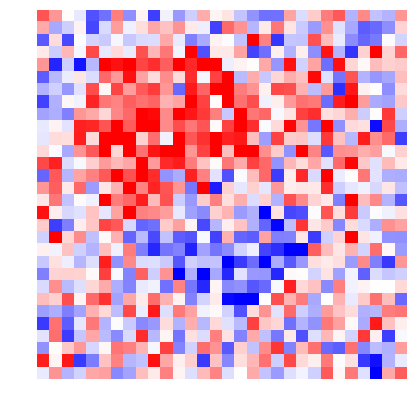}
\includegraphics[width=0.2\textwidth]{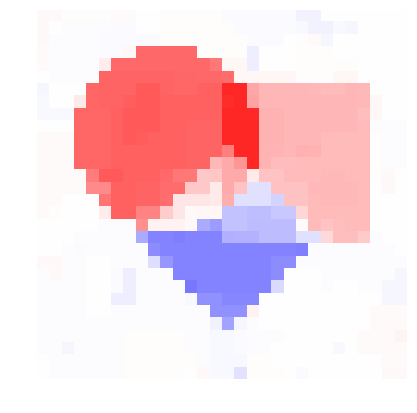}\\
\includegraphics[width=0.2\textwidth]{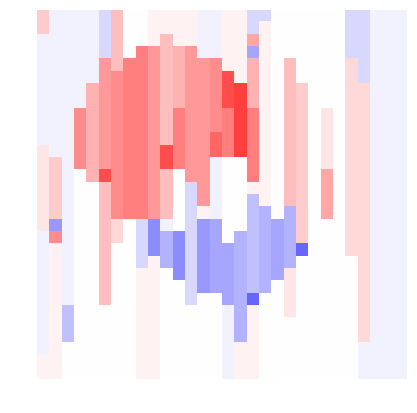}
\includegraphics[width=0.2\textwidth]{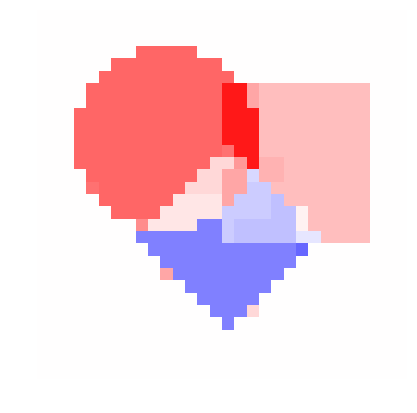}
\includegraphics[width=0.2\textwidth]{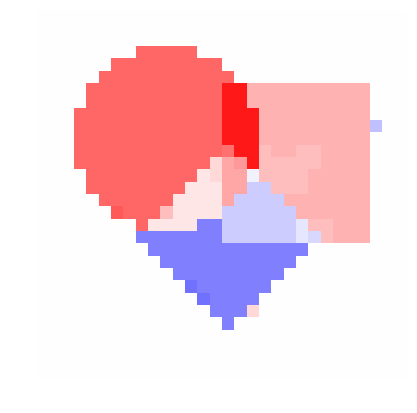}
\end{center}
\caption{Top-left: True image $\btheta^*$, with values between $-0.5$ (blue) and $0.9$ (red). Top-middle: Noisy image $\frac{1}{n}\X^\top \y$, for $\y=\X\btheta^*+\e$ with Gaussian design and noise standard deviation $\sigma=1.5$. Top-right: Best total-variation penalized estimate $\widehat{\btheta}$. Bottom row: Best tree-PGD estimate $\widehat{\btheta}$ for a fixed line graph $T_t$ in every iteration (zig-zagging vertically through $G$, bottom left), a different random tree with $d_{\max}=2$ in each iteration (bottom middle), and a different random tree with $d_{\max}=4$ in each iteration (bottom right).}\label{fig:2D}
\end{figure}

\begin{table}
\begin{center}
\begin{tabular}{|l||c|c|c|c|c|}
\hline
Noise std. dev. $\sigma$ & 1.0 & 1.5 & 2.0 & 2.5 & 3.0 \\
\hline
Fixed line & 0.0372 & 0.0373 & 0.0383 & 0.0388 & 0.0407 \\
Random, $d_{\max}=2$ & 0.0005 & 0.0009 & 0.0020 & 0.0040 & 0.0058 \\
Random, $d_{\max}=3$ & 0.0003 & 0.0008 & 0.0014 & 0.0028 & 0.0052 \\
Random, $d_{\max}=4$ & 0.0003 & 0.0007 & 0.0013 & 0.0032 & 0.0055 \\
\hline
Total variation & 0.0006 & 0.0013 & 0.0023 & 0.0036 & 0.0052 \\
\hline
\end{tabular}
\end{center}
\caption{MSE $\frac{1}{p}\|\widehat{\btheta}-\btheta^*\|_2^2$ for recovering the image of Figure \ref{fig:2D} (under best tuning of $S$), averaged across 20 independent simulations. For tree-PGD, using a different random tree $T_t$ per iteration yields a sizeable improvement over using a fixed line graph across all iterations, and small improvements are observed for increasing $d_{\max}$. Average MSE for the total-variation penalized estimate is provided for comparison (under best tuning of $\lambda$).}\label{tab:2D}
\end{table}

Theorem \ref{thm:main} applies for any choices of trees $T_1,\ldots,T_\tau$ in tree-PGD, with any maximum degree $d_{\max} \geq 2$. We perform a small simulation study in the linear model (\ref{eq:linearmodel}) to compare the empirical estimation accuracy of tree-PGD using different tree constructions.

We recover the image $\btheta^*$ depicted in Figure \ref{fig:2D} on a $30 \times 30$ lattice graph $G$, using $n=500$ linear measurements with $\x_i \sim \N(0,\I)$ and $e_i \sim \N(0,\sigma^2)$. For $\sigma=1.5$, a noisy image $\frac{1}{n}\X^\top \y=\btheta^*+(\frac{1}{n}\X^\top \X-\I)\btheta^*+\frac{1}{n}\X^\top\e$ is also depicted.

\paragraph{Tree construction: }We applied tree-PGD in two settings: First, we constructed $T_t$ using a deterministic DFS over $G$, fixed across all iterations. This resulted in $T_t$ being a line graph that zig-zags vertically through $G$. Second, we constructed $T_t$ using a different spanning tree $\tilde{T}_t$ generated by random DFS in each iteration. The DFS procedure started at a uniform random node and, at each forward step, chose a uniform random unvisited neighbor. We tested restricting to $d_{\max}=2$ or $d_{\max}=3$ for $T_t$, or letting $T_t=\tilde{T}_t$ (corresponding to $d_{\max}=4$). In all experiments, we used $\tau=80$, $\eta=1/5$, and $(\Delta_{\min},\Delta_{\max},\delta)=(-0.6,1.0,0.05)$.

Results for a single experiment at $\sigma=1.5$ are depicted in Figure \ref{fig:2D}, and average MSE across 20 experiments for varying $\sigma$ are reported in Table \ref{tab:2D}. These results correspond to the best choices $S=\kappa s^*$ across a range of tested values. Estimation accuracy is substantially better using different and random trees than using the same fixed line graph. We observe small improvements using $d_{\max}=3$ or $d_{\max}=4$ over random line graphs with $d_{\max}=2$, especially in the higher signal-to-noise settings. For comparison, we display in Figure \ref{fig:2D} and Table \ref{tab:2D} also the total-variation (TV) regularized estimate $\widehat{\btheta}=\argmin_{\btheta} \frac{1}{2n}\|\y-\X\btheta\|_2^2+\lambda \|\nabla_G \btheta\|_1$ and its average MSE, corresponding to the best choices of $\lambda$. We observe that tree-PGD, which targets the exact gradient-sparsity rather than a convex surrogate, is more accurate in high signal-to-noise settings, and becomes less accurate in comparison with TV as signal strength decreases. This agrees with previous observations made in similar contexts in \citep{hastie2017extended,mazumder2017subset,fan2018approximate}.

\section{Discussion}
\label{sec:discussion} We have shown linear convergence of gradient descent with
projections onto the non-convex space of gradient-sparse vectors on a graph. Our results
show that this method achieves strong statistical guarantees in regression models, without requiring a
matching between the underlying graph and design matrix. We do this by introducing
a careful comparison between gradient-sparse approximations at different sparsity levels, which generalizes
previous results for coordinate-sparse vectors.

Our theory is presented in such a way that allows the approximation trees to
vary at each iteration. However, this is not required and the tree
can be fixed with $d_{\max} = 2$ at the start of the algorithm. Nevertheless, we
observe experimentally that using a different random tree in each iteration
substantially improves the practical performance. Our intuition for the
improvement with random trees is that the gradient-sparsity of the signal on the
original graph $G$ may be better captured by the average sparsity with respect to
a randomly chosen sub-tree of $G$, than by the sparsity with respect to any fixed
sub-tree. By using a different random tree in each iteration, the algorithm is
better targeting this average sparsity. This observation will be studied in future
work.

Another interesting direction for future work is to explore the connections between this work and
computationally tractable sparse linear regression problems with highly
correlated designs. For instance, some work \citep{buhlmann2013correlated,
  dalalyan2017prediction} discuss various ways to overcome correlated designs. In our setting,
the tree projection step enables a computationally efficient method, and it is of interest to understand more general settings where one may overcome the correlated structure of the problem using a computationally efficient procedure.

\section{Acknowledgement}

This research is supported in part by NSF Grant DMS-1916198 and DMS-1723128.

\newpage

\bibliography{treePGD}

\appendix

\section{Correctness and complexity of algorithm}

We prove Lemmas \ref{lem:tree} and \ref{lem:dp} on basic guarantees for the two steps of the tree-PGD algorithm.\\

\begin{proof}[Proof of Lemma \ref{lem:tree}]
For the first statement, since $d_{\max} \geq 2$, the vertex $w$ corresponding to each deleted edge $(v,w)$ must be a child of $v$ which is not its first child in the ordering $\mathcal{O}_{DFS}$. Then its preceding vertex $w'$ must be a leaf vertex of $\tilde{T}$. Each such $w$ corresponds to a different such leaf $w'$, so deleting these edges $(v,w)$ and adding $(w',w)$ preserves the connectedness and tree structure. By construction, each non-leaf vertex of $\tilde{T}$ has degree at most $d_{\max}$ in $T$. Each leaf vertex of $\tilde{T}$ has degree at most $2 \leq d_{\max}$ in $T$, so $T$ has maximum degree $\leq d_{\max}$.

For the second statement, since the edges of $\tilde{T}$ are a subset of those of $G$,
\begin{align*}
\|\nabla_{\tilde{T}} \btheta\|_0\leq \|\nabla_G \btheta\|_0.
\end{align*}
Let the root vertex of $T$ be 1. For each other vertex $i \geq 2$, denote its parent in $T$ by $p(i)$. Then
\begin{align}\label{eq:tree1}
\|\nabla_T\btheta\|_0=\sum_{i=2}^{p}\1\{\theta_i \neq \theta_{p(i)}\}.
\end{align}
Now consider two cases: If the edge $(i,p(i))$ exists in $\tilde{T}$, then it is a forward edge in the DFS of $\tilde{T}$, and $\1\{\theta_i \neq \theta_{p(i)}\}$ contributes to $\|\nabla_{\tilde{T}} \btheta\|_0$. If $(i,p(i))$ is not an edge of $\tilde{T}$, then $p(i)$ is a leaf node in $\tilde{T}$, and there is path of backward edges $(p_1,p_2,\ldots,p_r)$ in the DFS of $\tilde{T}$ where $p_1=p(i)$ and $p_r=i$. The triangle inequality then implies 
\begin{align*}
\1\{\theta_i \neq \theta_{p(i)}\} \leq 
\sum_{j=1}^{r-1}\1\{\theta_{p_j} \neq \theta_{p_{j+1}}\},
\end{align*}
where each term on the right contributes to $\|\nabla_{\tilde{T}} \btheta\|_0$.
Applying this to each term
on the right of \eqref{eq:tree1}, and invoking the fundamental property that DFS visits each edge of $\tilde{T}$ exactly twice, we get
\begin{align*}
\|\nabla_T \btheta\|_0\leq 2\|\nabla_{\tilde{T}} \btheta\|_0\leq 2\|\nabla_G \btheta\|_0.
\end{align*}
\end{proof}

\begin{proof}[Proof of Lemma \ref{lem:dp}]
It is clear that Step 1 computes (\ref{eq:dp}) at the leaf vertices $v$. For Step 2, assume inductively that $f_w(c,s)$ is the value (\ref{eq:dp}) for all children $w$ of $v$. The value $g_w(c,s)$ represents the minimum value of $\|\btheta-\u_{T_w}\|_2^2$, if $\theta_v=c$ and the gradient-sparsity of $\btheta$ on $T_w$ \emph{and} the additional edge $(v,w)$ is at most $s$---we have either $\theta_w=c$ and $g_w(c,s)=f_w(c,s)$, or $\theta_w \neq c$, in which case $\theta_w=\argmin_{c \in \Delta} f_w(c,s-1)$ and $g_w(c,s)=m_w(s-1)$.
Then (\ref{eq:dp2}) computes (\ref{eq:dp}) at $v$ by partitioning the gradient-sparsity $s$ across its $k$ children, and summing the costs $g_{w_i}(c,s_i)$ and the additional cost $(c-u_v)^2$ for the best such partition. Thus Step 2 correctly computes (\ref{eq:dp}) for each vertex $v$. In particular, the minimum value for (\ref{eq:dpobj}) is given by $\min_{c \in \Delta} f_o(c,S)$. The minimizer $\btheta$ is obtained by examining the minimizing choices in Steps 1 and 2, which is carried out in Steps 3 and 4: Each $\theta_v$ is the value of $\btheta$ at $v$, and each $S_v$ is (an upper-bound for) the value of $\|\nabla_{T_v} \btheta\|_0$ at the minimizer $\btheta$.

For each vertex $v$, Step 1 has complexity $(S+1)|\Delta|$, Steps 2(a) and 2(b) both have complexity $(S+1)k|\Delta|$, and Step 2(c) has complexity $(S+1)|\Delta|k\binom{S+k-1}{k-1}$, as there are $\binom{s+k-1}{k-1} \leq \binom{S+k-1}{k-1}$ partitions of $s$ into $s_1,\ldots,s_k$. Note that $k \leq d_{\max}-1$, where this holds also for the root vertex $o$ because we chose it to have degree 1 in $T$. Then $\binom{S+k-1}{k-1}=O((S+d_{\max})^{d_{\max}-2})$. Storing the relevant minimizers in Steps 1 and 2, the complexity of Steps 3 and 4 is $O(1)$ per vertex. So the total complexity is $O(d_{\max}p|\Delta|(S+d_{\max})^{d_{\max}-1})$.
\end{proof}

\section{Proof of Lemma \ref{lem:key}}

\begin{proof}
Let $\cP^S$ be the partition of $\{1,\ldots,p\}$ induced by $\u^S$ over $T$.
We have $|\partial_T \cP^S| \leq S$. If $|\partial_T \cP^S|<S$, then let us
arbitrarily split some vertex sets in $\cP^S$ along edges of $T$,
until $|\partial_T \cP^S|=S$. Thus, we may assume
henceforth that $|\partial_T \cP^S|=S$.

We construct another partition $\cP'$ of $\{1,\ldots,p\}$
into the (disjoint) vertex sets $(V_1,\ldots,V_B,R)$, such that each set
of $\cP'$ is connected over $T$, and $\cP'$ satisfies the following properties:
\begin{enumerate}
\item For each $b=1,\ldots,B$, the number of edges $(i,j)$ in $T$ where both
$i,j \in V_b$, but $i$ and $j$ do not belong to the same set of $\cP^S$,
is greater than or equal to $s^*+\sqrt{\kappa s^*}$.
\item $B$ has the upper and lower bounds
\begin{equation}\label{eq:Bbounds}
\frac{S-s^*-\sqrt{S}}{(d_{\max}-1)(s^*+\sqrt{S}+1)+1}
\leq B \leq \sqrt{S}
\end{equation}
\end{enumerate}

We construct this partition $\cP'$ in the following way: Initialize
$\tilde{T}=T$ and pick any degree-1 vertex of $T$ as its root.
Assign to each edge $(i,j)$ of $\tilde{T}$ a
``score'' of 1 if $i$ and $j$ belong to the same set of $\cP^S$, and 0
otherwise. Repeat the following steps for all vertices $i$ of $T$, in
reverse-breadth-first-search order (starting from a vertex $i$ farthest from the
root):
\begin{itemize}
\item Let $\tilde{T}_i$ be the sub-tree of $\tilde{T}$ rooted at $i$ and
consisting of the descendants of $i$ in $\tilde{T}$.
\item If the total score of edges in $\tilde{T}_i$ is at least 
$s^*+\sqrt{\kappa s^*}$, then add the vertices of
$\tilde{T}_i$ as a set $V_b$ to the partition $\cP'$, and remove
$\tilde{T}_i$ (including the edge from $i$ to its parent) from $\tilde{T}$.
\end{itemize}
This terminates when the remaining tree $\tilde{T}$ has total score less than
$s^*+\sqrt{\kappa s^*}$. Take the last set $R$ of $\cP'$ to be the vertices of
this remaining tree.

By construction, each set $V_1,\ldots,V_B,R$ is connected on
$T$, and property 1 above holds.
To verify the bounds in property 2, note that
the total score of the starting tree $\tilde{T}=T$ is $S$, and the total score of the
final tree belongs to the range $[0,s^*+\sqrt{\kappa s^*})$.
Each time we remove a sub-tree
$\tilde{T}_i$, the score of $\tilde{T}$ decreases by at least $s^*+\sqrt{\kappa
s^*}$. We claim that the
score also decreases by at most $(d_{\max}-1)(s^*+\sqrt{\kappa s^*}+1)+1$:
This is because $i$ has at most $d_{\max}-1$ children, and
if $\tilde{T}_i$ has total score $\geq (d_{\max}-1)(s^*+\sqrt{\kappa s^*}+1)$,
then some sub-tree rooted at one of its children $j$ would have total score
$\geq s^*+\sqrt{\kappa s^*}$.
(The additional $+1$ accounts for a possible $+1$ score on
the edge $(i,j)$.) This sub-tree $\tilde{T}_j$ would have been removed
under the above reverse-breadth-first-search ordering, so this is not possible.
Thus, $\tilde{T}_i$ has total score
$<(d_{\max}-1)(s^*+\sqrt{\kappa s^*}+1)$, verifying our claim. 
Then the total number $B$ of sub-trees removed must satisfy
\[\frac{S-(s_*+\sqrt{\kappa s^*})}{(d_{\max}-1)(s^*+\sqrt{\kappa s^*}+1)+1}
\leq B \leq \frac{S}{s^*+\sqrt{\kappa s^*}}.\]
Recalling $S=\kappa s^*$, this implies (\ref{eq:Bbounds}) as desired.

Now let $\cP^*$ be the partition of $\{1,\ldots,p\}$ induced by $\u^*$ over $T$,
and let $\cP$ be the common refinement of $\cP^S$, $\cP^*$, and $\cP'$
constructed above: Each edge of $T$ which connects two different sets of $\cP$
must connect two different sets of at least one of $\cP^S$, $\cP^*$, and
$\cP'$. Then the subspace $K$ associated to $\cP$ contains $K^S$ and $K^*$, and
furthermore
\[|\partial_T \cP| \leq |\partial_T \cP^S|+
|\partial_T \cP^*|+|\partial_T \cP'| \leq S+s^*+B \leq S+s^*
+\sqrt{S}.\]
Here, we have used $|\partial_T \cP'|=B$ because $\cP'$ consists of $B+1$
connected sets over $T$.

For each $b=1,\ldots,B$, recall the set $V_b$ of $\cP'$, and
construct a vector $\v^b \in \R^p$ whose coordinates are
\[(\v^b)_i=\begin{cases} (\u^*)_i & \text{ if } i \in V_b \\
(\bP_K\u)_i & \text{ if } i \notin V_b. \end{cases}\]
That is, $\v^b$ is equal to $\u^*$ on $V_b$ and equal to $\bP_K\u$ outside $V_b$. Then
\begin{equation}\label{eq:comparison}
\|\bP_K\u-\u^*\|_2^2 \geq \sum_{b=1}^B \sum_{i \in V_b}
|(\bP_K\u)_i-(\u^*)_i|^2=\sum_{b=1}^B \|\bP_K\u-\v^b\|_2^2.
\end{equation}

We claim that $\|\nabla_T \v^b\|_0 \leq S$: Indeed, the edges $(i,j)$ of $T$
where $(\v^b)_i \neq (\v^b)_j$ are contained in the union of
$\partial_T \cP^*$, $\partial_T \cP'$, and the edges of $\partial_T \cP^S$
whose endpoints both belong to the complement of $V_b$.
Since $|\partial_T \cP^S|=S$, and of these $S$ edges, at
least $s^*+\sqrt{\kappa s^*}$ have both endpoints in $V_b$ by property 1 of our construction of $\cP'$, this implies
$\|\nabla_T \v^b\|_0 \leq s^*+B+(S-s^*-\sqrt{\kappa s^*}) \leq S$.

Finally, we use this to lower-bound the right side of (\ref{eq:comparison}):
Observe that by construction, $\u^S$ and all of the vectors $\v^b$
for $b=1,\ldots,B$ belong to the subspace $K$ associated to $\cP$.
Note that
\begin{equation}\label{eq:uSoptimality}
\|\u-\v^b\|_2^2 \geq \|\u-\u^S\|_2^2
\end{equation}
by optimality
of $\u^S$ and the condition $\|\nabla_T \v^b\|_0 \leq S$ shown above. So,
applying the Pythagorean identity for the projection $\bP_K$ and its orthogonal projection $\bP_K^\perp$,
\[\|\bP_K\u-\v^b\|_2^2=\|\u-\v^b\|_2^2-\|\bP_K^\perp \u\|_2^2
\geq \|\u-\u^S\|_2^2-\|\bP_K^\perp \u\|_2^2=\|\bP_K\u-\u^S\|_2^2.\]
Applying this to (\ref{eq:comparison}), we get
\[\|\bP_K\u-\u^*\|_2^2 \geq B \cdot \|\bP_K\u-\u^S\|_2^2.\]
Combining this with the lower-bound on $B$ in (\ref{eq:Bbounds}) yields the lemma.
\end{proof}

\section{Proof of Theorem \ref{thm:main}}

We first extend the result of Lemma \ref{lem:key} to address the discretization error in our approximate projection step (\ref{eq:proj}).

\begin{lemma}\label{lem:main_step1}
In the setting of Lemma \ref{lem:key}, suppose that $\u$ and $\u^*$ are as defined in Lemma \ref{lem:key}, but
\begin{equation}\label{eq:uS}
\u^S=\argmin_{\btheta \in \Delta^p:\|\nabla_T \btheta\|_0 \leq S}
\|\u-\btheta\|_2
\end{equation}
where the minimization is over the discrete lattice
$\Delta=(\Delta_{\min},\Delta_{\min}+\delta,\ldots,\Delta_{\max}-\delta,\Delta_{\max})$.
If $[-\|\u\|_\infty,\|\u\|_\infty] \subseteq [\Delta_{\min},\Delta_{\max}]$,
then the result of Lemma \ref{lem:key} still holds,  with (\ref{eq:projcomparison}) replaced by
\begin{equation}\label{eq:projcomparisondiscrete}
\|\bP_K\u-\u^S\|_2^2 \leq \frac{(d_{\max}-1)(s^*+\sqrt{S}+1)+1}
{S-s^*-\sqrt{S}}\,\|\bP_K\u-\u^*\|_2^2+p\delta^2.
\end{equation}
\end{lemma}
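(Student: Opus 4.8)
The plan is to re-run the proof of Lemma \ref{lem:key} essentially verbatim, changing only the single place where the optimality of $\u^S$ is invoked, namely (\ref{eq:uSoptimality}). Every ingredient preceding that step---the construction of the partition $\cP'=(V_1,\ldots,V_B,R)$, its bounds (\ref{eq:Bbounds}) on $B$, the common refinement $\cP$ with associated subspace $K \supseteq K^S+K^*$ and $|\partial_T\cP| \le S+s^*+\sqrt S$, the comparison vectors $\v^b$ with $\|\nabla_T\v^b\|_0 \le S$, and the inequality (\ref{eq:comparison})---depends only on the partition induced by $\u^S$ over $T$, not on $\u^S$ being a continuous minimizer. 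Hence all of these carry over unchanged, and I only need a discrete substitute for the consequence $\|\bP_K\u-\v^b\|_2^2 \ge \|\bP_K\u-\u^S\|_2^2$.

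Since $\u^S$ now minimizes $\|\u-\btheta\|_2$ only over $\btheta \in \Delta^p$ with $\|\nabla_T\btheta\|_0 \le S$, for each $b$ I must exhibit a \emph{discrete} competitor $\btheta^b \in \Delta^p$ satisfying $\btheta^b \in K$, $\|\nabla_T\btheta^b\|_0 \le S$, and $\|\bP_K\u-\btheta^b\|_2^2 \le \|\bP_K\u-\v^b\|_2^2+p\delta^2$. Given such a $\btheta^b$, optimality of $\u^S$ gives $\|\u-\u^S\|_2^2 \le \|\u-\btheta^b\|_2^2$, and since both $\u^S,\btheta^b \in K$ the Pythagorean identity for $\bP_K$ cancels the common component $\|\bP_K^\perp\u\|_2^2$ from both sides, yielding $\|\bP_K\u-\u^S\|_2^2 \le \|\bP_K\u-\v^b\|_2^2+p\delta^2$. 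Summing over $b=1,\ldots,B$, combining with (\ref{eq:comparison}), and dividing by $B$ while invoking the lower bound on $B$ in (\ref{eq:Bbounds}) then produces (\ref{eq:projcomparisondiscrete}).

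The construction of $\btheta^b$ is the crux. The naive choice of coordinatewise-rounding $\v^b$ fails for two reasons: $\v^b$ equals $\u^*$ on $V_b$, whose values need not lie in $[\Delta_{\min},\Delta_{\max}]$, and the resulting expansion of $\|\bP_K\u-\btheta^b\|_2^2$ carries a cross term $2\langle \bP_K\u-\v^b,\v^b-\btheta^b\rangle$ that scales with $\|\bP_K\u-\v^b\|_2$ and would corrupt the multiplicative constant. I would instead round the \emph{projection} rather than $\v^b$: on each maximal connected block $P$ of $T$ on which $\v^b$ is constant (each such block is a union of $\cP$-pieces, since $\v^b \in K$, which guarantees $\btheta^b \in K$ and $\|\nabla_T\btheta^b\|_0 \le \|\nabla_T\v^b\|_0 \le S$), set $\btheta^b$ equal to the grid point $q_P \in \Delta$ nearest to the mean $\bar a_P := |P|^{-1}\sum_{i \in P}(\bP_K\u)_i$. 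Because each entry of $\bP_K\u$ is an average of entries of $\u$, we have $\bar a_P \in [-\|\u\|_\infty,\|\u\|_\infty] \subseteq [\Delta_{\min},\Delta_{\max}]$, so $|\bar a_P-q_P| \le \delta$ and $\btheta^b \in \Delta^p$. The payoff is the bias--variance identity $\sum_{i \in P}((\bP_K\u)_i-c)^2 = |P|(\bar a_P-c)^2+\sum_{i \in P}((\bP_K\u)_i-\bar a_P)^2$: applied with $c=q_P$ and with $c$ the constant value of $\v^b$ on $P$, the variance terms coincide, so the per-block difference equals $|P|\big((\bar a_P-q_P)^2-(\bar a_P-\v^b|_P)^2\big) \le |P|\delta^2$, with no cross term. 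Summing over blocks gives $\|\bP_K\u-\btheta^b\|_2^2 \le \|\bP_K\u-\v^b\|_2^2+\sum_P |P|\delta^2 = \|\bP_K\u-\v^b\|_2^2+p\delta^2$, as required.

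I expect the main obstacle to be precisely the design of $\btheta^b$: arranging simultaneously that it lands on $\Delta^p$, preserves $\|\nabla_T\btheta^b\|_0 \le S$ and membership in $K$, and loses only an additive $p\delta^2$ while retaining the \emph{same} multiplicative constant as in Lemma \ref{lem:key}. Rounding the block means of $\bP_K\u$---rather than $\u^*$ or $\v^b$---is what makes all three requirements compatible, since it keeps the rounded values inside the grid range and exploits the bias--variance identity to eliminate the cross term that would otherwise degrade the constant.
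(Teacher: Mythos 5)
Your proof is correct, and at the decisive step it takes a route different from the paper's. The paper keeps a single global competitor: it introduces the \emph{continuous} minimizer $\check{\u}^S$ of \eqref{eq:uS} over $\R^p$, uses its optimality to get $\|\u-\v^b\|_2^2 \geq \|\u-\check{\u}^S\|_2^2$ for every $b$, then rounds $\check{\u}^S$ entrywise to $\check{\u}^S_\Delta \in \Delta^p$ (feasible because $\check{\u}^S$ takes block-average values of $\u$, so $\|\check{\u}^S\|_\infty \leq \|\u\|_\infty$ and the rounding preserves the sparsity pattern), and finally exploits orthogonality of the residual $\u-\check{\u}^S$ to the rounding perturbation $\check{\u}^S-\check{\u}^S_\Delta$ to conclude $\|\u-\v^b\|_2^2 \geq \|\u-\u^S\|_2^2-p\delta^2$ via discrete optimality of $\u^S$ against $\check{\u}^S_\Delta$. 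You instead bypass the continuous minimizer entirely and build, for each $b$, a bespoke discrete competitor $\btheta^b$ by rounding the block means of $\bP_K\u$ over the constancy blocks of $\v^b$; your verification that $\btheta^b \in \Delta^p \cap K$ with $\|\nabla_T \btheta^b\|_0 \leq \|\nabla_T \v^b\|_0 \leq S$ is sound, and your bias--variance identity is exactly the same cancellation that appears in the paper as residual-to-rounding orthogonality, so both arguments lose only the additive $p\delta^2$ and keep the multiplicative constant of Lemma \ref{lem:key} intact. What each buys: the paper's version is more economical (one competitor, one rounding, and the chain of optimality inequalities $\v^b \to \check{\u}^S \to \check{\u}^S_\Delta \to \u^S$ does all the work), while yours is more self-contained per $b$ and makes transparent \emph{why} the constant is untouched --- the discretization enters only through the per-block mean offsets $|\bar a_P - q_P| \leq \delta$, with no cross term against $\|\bP_K\u-\v^b\|_2$, which is precisely the failure mode of naive coordinatewise rounding of $\v^b$ that you correctly identified and avoided.
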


\begin{proof}
The proof is the same as Lemma \ref{lem:key}, up until (\ref{eq:uSoptimality}) where we used optimality of $\u^S$: We define $\cP^S$ and construct $\cP$ as in Lemma \ref{lem:key}, using this discrete vector $\u^S$. Now let us denote by $\check{\u}^S$ the
minimizer of (\ref{eq:uS}) over $\R^p$ rather than over $\Delta^p$. Note that we do not necessarily have $\check{\u}^S \in K^S$, i.e.\ $\check{\u}^S$ may have a different gradient-sparsity pattern from $\u^S$. However, since $\|\nabla_T \v^b\|_0 \leq S$, we still have the bound $\|\u-\v^b\|_2^2 \geq \|\u-\check{\u}^S\|_2^2$ in place of (\ref{eq:uSoptimality}), by optimality of $\check{\u}^S$.

Let $\check{\u}_\Delta^S$ be the vector $\check{\u}^S$ with each entry rounded to the closest value in $\Delta$.
Note that the value of $\check{\u}^S$ on each set of its induced
partition over $T$ is the average of the entries of $\u$ over
this set: This implies that $\|\check{\u}^S\|_\infty \leq \|\u\|_\infty$, and
also that the residual $\u-\check{\u}^S$ is orthogonal to
$\check{\u}^S-\check{\u}^S_\Delta$. By the given condition on $\Delta_{\min}$
and $\Delta_{\max}$, we have the entrywise bound
$\|\check{\u}_\Delta^S-\check{\u}^S\|_\infty \leq \delta$ from the rounding.
Then 
\[\|\u-\v^b\|_2^2 \geq \|\u-\check{\u}^S\|_2^2
=\|\u-\check{\u}_\Delta^S\|_2^2-\|\check{\u}_\Delta^S-\check{\u}^S\|_2^2
\geq \|\u-\check{\u}_\Delta^S\|_2^2-p\delta^2.\]
Since $\check{\u}_\Delta^S \in \Delta^p$ also
satisfies $\|\nabla_T \check{\u}_\Delta^S\|_0 \leq S$,
optimality of $\u^S$ implies $\|\u-\check{\u}_\Delta^S\|_2^2 \geq \|\u-\u^S\|_2^2$. Substituting above and continuing the proof as in Lemma \ref{lem:key}, we get the bound
\[\|\bP_K\u-\u^*\|_2^2 \geq B \cdot (\|\bP_K\u-\u^S\|_2^2-p\delta^2),\]
and rearranging and applying the lower-bound for $B$ concludes the proof as before.
\end{proof}

The second step of the proof is carried out by the following lemma, establishing a key property of the gradient mapping following ideas of
Theorem 2.2.7 in \citep{nesterov2013introductory}.

\begin{lemma}\label{lem:main_step2}
Let $(T_1,T_2)$ be two trees on $\{1,\ldots,p\}$. Let $(\cP_1,\cP_2)$ be two
partitions of $\{1,\ldots,p\}$, with associated subspaces $(K_1,K_2)$,
such that $|\partial_{T_1} \cP_1| \leq s$ and
$|\partial_{T_2} \cP_2| \leq s$ for some sparsity level $s>0$.
Let $K=K_1+K_2$, and let $\bP_K$ be the orthogonal projection onto $K$.

Let $\cL$ be a loss function satisfying cRSC and cRSS with respect to
$(T_1,T_2)$, at sparsity level $s$ and with convexity and smoothness constants $\alpha,L>0$. Fix $\btheta_1 \in K_1$ and define
\[\u=\bP_K(\btheta_1-\nabla \cL(\btheta_1)/L),\qquad
\v=\argmin_{\btheta \in K} \cL(\btheta).\]
Then
\begin{enumerate}[(a)]
\item $\|\u-\v\|_2 \leq \sqrt{1-\alpha/L} \cdot \|\btheta_1-\v\|_2$, and
\item $\|\btheta_1-\v\|_2 \leq (2/\alpha) \cdot \|\bP_K \nabla
\cL(\btheta_1)\|_2$.
\end{enumerate}
\end{lemma}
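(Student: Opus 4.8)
The plan is to treat $\u$ as a projected gradient step and run the standard gradient-mapping argument (following Nesterov's Theorem 2.2.7), but while carefully keeping every vector inside the subspace $K$ so that the cRSC and cRSS inequalities, which are only asserted for pairs of points in $K$, may legitimately be invoked. The first observation I would record is that since $\btheta_1 \in K_1 \subseteq K$ we have $\bP_K\btheta_1 = \btheta_1$, so $\u = \btheta_1 - \bP_K\nabla\cL(\btheta_1)/L$; equivalently, the gradient mapping satisfies $L(\btheta_1 - \u) = \bP_K\nabla\cL(\btheta_1)$, and $\btheta_1, \u, \v$ all lie in $K$.

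For part (b), I would apply cRSC to the pair $(\btheta_1,\v)$, both of which lie in $K$, to obtain $\cL(\v) \geq \cL(\btheta_1) + \langle \v-\btheta_1, \nabla\cL(\btheta_1)\rangle + \frac{\alpha}{2}\|\v-\btheta_1\|_2^2$. Since $\v$ minimizes $\cL$ over $K$ and $\btheta_1 \in K$, the left side is at most $\cL(\btheta_1)$, so the first-order and quadratic terms together are nonpositive. Because $\v-\btheta_1 \in K$, I may replace $\nabla\cL(\btheta_1)$ by its projection $\bP_K\nabla\cL(\btheta_1)$ inside the inner product, and Cauchy--Schwarz then yields $\frac{\alpha}{2}\|\v-\btheta_1\|_2 \leq \|\bP_K\nabla\cL(\btheta_1)\|_2$, which is exactly (b).

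For part (a), the key device is to introduce the quadratic model $q(\btheta) = \cL(\btheta_1) + \langle \nabla\cL(\btheta_1), \btheta-\btheta_1\rangle + \frac{L}{2}\|\btheta-\btheta_1\|_2^2$ and to verify that $\u$ is precisely the minimizer of $q$ over $\btheta \in K$ (the first-order condition $\bP_K\nabla\cL(\btheta_1) + L(\btheta-\btheta_1) = 0$ recovers the formula for $\u$). Since $q$ is a quadratic with Hessian $L \cdot \I$ minimized over the subspace $K$ at $\u$, the exact second-order identity gives $q(\v) = q(\u) + \frac{L}{2}\|\u-\v\|_2^2$ for every $\v \in K$. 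I would then sandwich the two sides: cRSS applied to $(\btheta_1,\u)$ gives $\cL(\u) \leq q(\u)$, and since $\u \in K$ and $\v$ minimizes $\cL$ over $K$ we have $\cL(\v) \leq \cL(\u) \leq q(\u)$; meanwhile cRSC applied to $(\btheta_1,\v)$ upper-bounds $q(\v) \leq \cL(\v) + \frac{L-\alpha}{2}\|\v-\btheta_1\|_2^2$. Substituting these into the identity $q(\v) = q(\u) + \frac{L}{2}\|\u-\v\|_2^2$ and cancelling the common $\cL(\v)$ yields $\frac{L}{2}\|\u-\v\|_2^2 \leq \frac{L-\alpha}{2}\|\v-\btheta_1\|_2^2$, which gives (a) after taking square roots.

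The main obstacle is not any single hard estimate but the bookkeeping that keeps everything confined to $K$: one must check that $\btheta_1, \u, \v$ all lie in $K$, that the cRSC/cRSS hypotheses (stated for partition pairs at sparsity level $s$) apply to each invoked pair, and, most crucially, that $\u$ is the \emph{exact} constrained minimizer of $q$ over $K$, so that the descent-lemma step is the clean equality $q(\v) = q(\u) + \frac{L}{2}\|\u-\v\|_2^2$ rather than a mere inequality. Once this constrained-optimality identity is secured, the remainder is the routine Nesterov sandwich.
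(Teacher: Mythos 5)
Your proof is correct and takes essentially the same approach as the paper's: part (b) is identical, and part (a) is the same Nesterov-style sandwich --- cRSS applied to $(\btheta_1,\u)$, cRSC applied to $(\btheta_1,\v)$, and optimality of $\v$ over $K$ --- around the same quadratic model, merely repackaged through the exact identity $q(\v)=q(\u)+\frac{L}{2}\|\u-\v\|_2^2$ at the constrained minimizer $\u$ (valid since $\bP_K\nabla q(\u)=0$) in place of the paper's direct expansion of $\|\u-\v\|_2^2$ and the equivalent inner-product inequality $\langle \g,\btheta_1-\v\rangle \geq \frac{1}{2L}\|\g\|_2^2+\frac{\alpha}{2}\|\btheta_1-\v\|_2^2$. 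Your bookkeeping that $\btheta_1,\u,\v\in K$, which licenses each cRSC/cRSS invocation, matches the paper's exactly.
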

\begin{proof}
Denote
\[\g=\bP_K \nabla \cL(\btheta_1).\]
Since $\btheta_1 \in K$, we have $\u=\btheta_1-\g/L$. Then
\[\|\u-\v\|_2^2=\|\btheta_1-\v-\g/L\|_2^2
=\|\btheta_1-\v\|_2^2+\frac{1}{L^2}\|\g\|_2^2-\frac{2}{L}\langle\g,
\btheta_1-\v\rangle.\]
So part (a) will follow from
\begin{align}\label{eq:main_step2.1}
\langle\g,\btheta_1-\v\rangle \geq 
\frac{1}{2L}\|\g\|_2^2+\frac{\alpha}{2}\|\btheta_1-\v\|_2^2.
\end{align}

To show (\ref{eq:main_step2.1}),
observe that $\v \in K=K_1+K_2$, so we may apply the cRSC condition
to $\btheta_1$ and $\v$. This gives
\begin{align}\label{eq:RSCapplication}
\cL(\v)\geq \cL(\btheta_1)+\langle \nabla \cL(\btheta_1),\v-\btheta_1\rangle
+\frac{\alpha}{2}\|\v-\btheta_1\|_2^2.
\end{align}
Then, introducing
\[Q(\btheta)=\cL(\btheta_1)+\langle \nabla \cL(\btheta_1),\btheta-\btheta_1
\rangle+\frac{L}{2} \|\btheta-\btheta_1\|_2^2,\]
we get
\[\cL(\v) \geq Q(\u)-\frac{L}{2} \|\u-\btheta_1\|_2^2
+\langle \nabla \cL(\btheta_1),\v-\u \rangle
+\frac{\alpha}{2}\|\v-\btheta_1\|_2^2.\]
Applying $\u-\btheta_1=-\g/L$ and $\v-\u \in K$, this gives
\begin{align*}
\cL(\v) &\geq Q(\u)-\frac{1}{2L} \|\g\|_2^2
+\langle \g,\v-\u \rangle
+\frac{\alpha}{2}\|\v-\btheta_1\|_2^2\\
&= Q(\u)+\frac{1}{2L} \|\g\|_2^2
+\langle \g,\v-\btheta_1 \rangle+\frac{\alpha}{2}\|\v-\btheta_1\|_2^2.
\end{align*}

Next, observe that $\u \in K=K_1+K_2$, so we may apply the cRSS condition to
$\btheta_1$ and $\u$. This yields $\cL(\u) \leq Q(\u)$. Since $\cL(\v) \leq
\cL(\u)$ by optimality of $\v$, combining these observations gives
\[0\geq\frac{1}{2L}\|\g\|_2^2+\langle\g,\v-\btheta_1\rangle+\frac{\alpha}{2}\|\v-\btheta_1\|_2^2.\]
Rearranging yields (\ref{eq:main_step2.1}), which establishes part (a).

For part (b), let us again apply (\ref{eq:RSCapplication}) and
the optimality condition $\cL(\v) \leq \cL(\btheta_1)$ to get
\begin{align*}
0 &\geq \langle \nabla \cL(\btheta_1),\v-\btheta_1\rangle
+\frac{\alpha}{2}\|\v-\btheta_1\|_2^2\\
&=\langle \g,\v-\btheta_1\rangle+\frac{\alpha}{2}\|\v-\btheta_1\|_2^2\\
&\geq -\|\g\|_2 \cdot \|\v-\btheta_1\|_2+\frac{\alpha}{2}\|\v-\btheta_1\|_2^2.
\end{align*}
Rearranging yields part (b).
\end{proof}

\begin{proof}[Proof of Theorem \ref{thm:main}]
Let $\u_t=\btheta_{t-1}-\tfrac{1}{L}\nabla \cL(\btheta_{t-1};Z_1^n)$.
We claim by induction that
\begin{equation}\label{eq:ubound}
[-\|\u_t\|_\infty,\|\u_t\|_\infty] \subseteq [\Delta_{\min},\Delta_{\max}]
\end{equation}
and
\begin{equation}\label{eq:main1}
\|\btheta_t-\btheta^*\|_2 \leq
\Gamma\cdot\|\btheta_{t-1}-\btheta^*\|_2+\frac{4(1+\gamma)}{\alpha}\cdot\Phi(S')+\delta\sqrt{p}
\end{equation}
for each $t=1,\ldots,\tau$.

To start the induction, first observe that for every $t \in \{1,\ldots,\tau\}$, the following holds: Fix any $i \in \{1,\ldots,p\}$ and let $K=K_{t-1}+K^*+\operatorname{span}(\e_i)$ where $(K_{t-1},K^*)$ are the subspaces associated to the partitions induced by $(\btheta_{t-1},\btheta^*)$ over $T_{t-1}$, and $\operatorname{span}(\e_i)$ is the 1-dimensional span of the $i^\text{th}$ standard basis vector $\e_i$.
If $\cP$ is the partition associated to $K$, then $|\partial_{T_{t-1}} \cP| \leq S+2s^*+d_{\max} \leq S'$ because $\|\nabla_{T_{t-1}} \btheta_{t-1}\|_0 \leq S$,
$\|\nabla_{T_{t-1}} \btheta^*\|_0 \leq 2s^*$ by Lemma \ref{lem:tree}, and $\|\nabla_{T_{t-1}} \e_i\|_0 \leq d_{\max}$. Applying the cRSS property for $\cL$ with respect to $(T_{t-1},T_t)$, we get that
the loss $\cL(\cdot\,;Z_1^n)$ is $L$-strongly-smooth restricted to $K$, meaning for all $\u,\v \in K$,
\[\cL(\u;Z_1^n) \leq \cL(\v;Z_1^n)+\langle \nabla \cL(\v),\u-\v \rangle
+\frac{L}{2}\|\u-\v\|_2^2.\]
Then applying Eq.\ (2.1.8) of \citep{nesterov2013introductory} to the loss $\cL(\cdot\,;Z_1^n)$ restricted to $K$, we have for all $\u,\v \in K$ that
\[\|\bP_K \nabla \cL(\u;Z_1^n)-\bP_K \nabla \cL(\v;Z_1^n)\|_2
\leq L\|\u-\v\|_2,\]
where $\bP_K$ is the orthogonal projection onto $K$. In particular,
\[\big|\langle \e_i,\nabla \cL(\btheta_{t-1};Z_1^n)-\nabla \cL(\btheta^*;Z_1^n) \rangle\big| \leq L\|\btheta_{t-1}-\btheta^*\|_2.\]
This holds for each standard basis vector $\e_i$, so
\begin{equation}\label{eq:gradientinfbound}
\frac{1}{L}\|\nabla \cL(\btheta_{t-1};Z_1^n)\|_\infty \leq \frac{1}{L}\|\nabla \cL(\btheta^*;Z_1^n)\|_\infty
+\|\btheta_{t-1}-\btheta^*\|_2.
\end{equation}
Then (\ref{eq:ubound}) holds for $t=1$ by the initialization $\btheta_0=0$ and the given conditions for $\Delta_{\min},\Delta_{\max}$.

Suppose by induction that (\ref{eq:ubound}) holds for $t$.
We apply Lemma \ref{lem:main_step1} to $T=T_t$, $\u^*=\btheta^*$, and
$\u=\u_t$. Note that by Lemma \ref{lem:tree}, $\|\nabla_T \btheta^*\|_0 \leq
2s^*$. Then by the definition of the update (\ref{eq:proj}), we have
$\u^S=\btheta_t$ in Lemma \ref{lem:main_step1}.
Denote by $\cP_2$ the partition guaranteed by
Lemma \ref{lem:main_step1}, with associated subspace $K_2$.
Then the lemma guarantees that
\[|\partial_{T_t} \cP_2| \leq S+2s^*+\sqrt{S} \leq S',\]
and furthermore
\[\|\bP_{K_2}\u_t-\btheta_t\|_2 \leq \gamma \cdot \|\bP_{K_2}\u_t-\btheta^*\|_2+\delta\sqrt{p}.\]
This bound implies
\begin{equation}\label{eq:step1}
\|\btheta_t-\btheta^*\|_2 \leq
\|\btheta_t-\bP_{K_2}\u_t\|_2+\|\bP_{K_2}\u_t-\btheta^*\|_2
\leq (1+\gamma)\|\bP_{K_2}\u_t-\btheta^*\|_2+\delta\sqrt{p}.
\end{equation}

Next, let us apply Lemma \ref{lem:main_step2}: Take $(T_1,T_2)$ in Lemma \ref{lem:main_step2} to be
$(T_{t-1},T_t)$. Take $\cP_1$ to be the common refinement of the partitions
induced by $\btheta_{t-1}$ and $\btheta^*$ over $T_{t-1}$, and let $\cP_2$ be as
above. Then $|\partial_{T_{t-1}}\cP_1| \leq S+2s^*<S'$ and
$|\partial_{T_t}\cP_2| \leq S'$, so the cRSC and cRSS conditions
required in Lemma \ref{lem:main_step2} are satisfied.
Let $K_1,K_2$ be the associated subspaces, and set $K=K_1+K_2$ and
\[\v=\argmin_{\btheta \in K} \cL(\btheta;Z_1^n).\]
First, we take $\btheta_1$ to be $\btheta_{t-1}$, and apply
Lemma \ref{lem:main_step2}(a) with $\u=\bP_K \u_t$. This gives
\begin{equation}\label{eq:step2}
\|\bP_K\u_t-\v\|_2 \leq
\sqrt{1-\frac{\alpha}{L}} \cdot \|\btheta_{t-1}-\v\|_2.
\end{equation}
Note that $\|\bP_{K_2}\u_t-\btheta^*\|_2 \leq \|\bP_K\u_t-\btheta^*\|_2$ 
because $\btheta^* \in K_2 \subseteq K$.
Applying this and (\ref{eq:step2}) to (\ref{eq:step1}),
\begin{align}
\|\btheta_t-\btheta^*\|_2 &\leq
(1+\gamma)\|\bP_K\u_t-\btheta^*\|_2+\delta\sqrt{p}\nonumber\\
&\leq (1+\gamma)\left(
\sqrt{1-\frac{\alpha}{L}}\cdot \|\btheta_{t-1}-\v\|_2+\|\v-\btheta^*\|_2\right)
+\delta\sqrt{p}\nonumber\\
&\leq (1+\gamma)\left(\sqrt{1-\frac{\alpha}{L}} \cdot
\|\btheta_{t-1}-\btheta_*\|_2+2\|\v-\btheta_*\|_2\right)+\delta\sqrt{p}.\label{eq:tmpbound}
\end{align}
Now, let us apply Lemma \ref{lem:main_step2}(b) with
$\btheta_1$ being $\btheta^*$. This gives
\[\|\v-\btheta_*\|_2 \leq (2/\alpha)\|\bP_K \nabla \cL(\btheta^*;Z_1^n)\|_2
\leq (2/\alpha)\Phi(S'),\]
the second bound holding by the cPGB assumption. Applying this to (\ref{eq:tmpbound}) establishes (\ref{eq:main1}) at the iterate $t$.

We may apply (\ref{eq:main1}) recursively for $1,\ldots,t$, using
$\btheta_0=0$ and $1+\Gamma+\Gamma^2+\ldots=1/(1-\Gamma)$, to get
\begin{equation}\label{eq:inductivebound}
\|\btheta_t-\btheta^*\|_2 \leq \Gamma^t \cdot \|\btheta^*\|_2
+\frac{1}{1-\Gamma}\left(\frac{4(1+\gamma)}{\alpha}\cdot\Phi(S')
+\delta\sqrt{p}\right)=\Gamma^t \cdot \|\btheta^*\|_2+\Lambda.
\end{equation}
In particular,
\[\|\btheta_t\|_2 \leq 2\|\btheta^*\|_2+\Lambda.\]
Then, applying also (\ref{eq:gradientinfbound}),
\begin{align*}
\|\u_{t+1}\|_\infty &\leq \|\btheta_t\|_\infty
+\frac{1}{L}\|\nabla \cL(\btheta^*;Z_1^n)\|_\infty+\|\btheta_t-\btheta^*\|_\infty\\
&\leq \frac{1}{L}\|\nabla \cL(\btheta^*;Z_1^n)\|_\infty
+3\|\btheta^*\|_2+2\Lambda.
\end{align*}
Then the given condition for $\Delta_{\min},\Delta_{\max}$ implies that
(\ref{eq:ubound}) holds for iteration $t+1$, completing the
induction. Finally, the theorem follows by applying (\ref{eq:inductivebound}) at $t=\tau$.
\end{proof}

\section{Proofs for cRSC, cRSS, and cPGB}

\begin{proof}[Proof of Lemma \ref{lem:ProjGrad}]
Note that there are $\binom{p-1}{S}$ different partitions $\cP_1$ of $V=\{1,\ldots,p\}$ with $|\partial_{T_1}\cP_1|=S$, and similarly for $\cP_2$, because each such partition corresponds to cutting $S$ of the $p-1$ edges of $T_1$. Let $g(S)=S\log(1+p/S)$. Then there are at most $\binom{p-1}{S}\cdot\binom{p-1}{S}\leq e^{2g(S)}$ different combinations of $(K_1,K_2)$, and hence at most this many subspaces $K$. Taking a union bound over all such $K$ gives, for any $\zeta>0$,
\begin{align*}
\P(\max_{K}\|\bP_K\nabla \cL(\btheta^*;Z_1^n)\|_2\geq \zeta)\leq e^{2g(S)}\cdot\max_{K}\P(\|\bP_K\nabla \cL(\btheta^*;Z_1^n)\|_2\geq \zeta).
\end{align*}
Note that the dimension of $K$ is less than the sum of dimensions of $K_1$ and $K_2$, which is at most $2(S+1)$. Applying a covering net argument, we may find a $1/2$-net $\N_{1/2}$ for the set $\{\v\in K:\|\v\|_2=1\}$ of cardinality at most $5^{2S+2}$. Thus,
\begin{align*}
\P(\|\bP_K\nabla \cL(\btheta^*;Z_1^n)\|_2\geq \zeta)&\leq \P(2\max_{\v\in \N_{1/2}}|\v^\top\nabla \cL(\btheta^*;Z_1^n)|\geq \zeta)\\
&\leq 5^{2S+2}\cdot\max_{\v\in \N_{1/2}} \P(2|\v^\top\nabla \cL(\btheta^*;Z_1^n)|\geq \zeta).
\end{align*}
Applying the subgaussian assumption on $\v^\top \nabla\cL(\btheta^*;Z_1^n)$, we get
\begin{align*}
\P(\max_{K}\|\bP_K\nabla \cL(\btheta^*;Z_1^n)\|_2\geq \zeta)\leq e^{2g(S)}\cdot 5^{2S+2}\cdot 2e^{-n\zeta^2/8\sigma^2}.
\end{align*}
Then for any $k>0$ and some constant $C_k>0$ depending only on $k$, setting $\zeta=\sqrt{C_k\sigma^2 g(S)/n}$ and applying $g(S)\geq\log(1+p)$, we get
\begin{align*}
\P(\max_{K}\|\bP_K\nabla \cL(\btheta^*;Z_1^n)\|_2\geq \sqrt{C_k\sigma^2 g(S)/n})\leq p^{-k}.
\end{align*}
\end{proof}

\begin{proof}[Proof of Proposition \ref{thm:prop1}]
We will consider a fixed $t$, and then apply a union bound over $1\leq t\leq \tau$.

For cRSC and cRSS, note that $\cL(\btheta;Z_1^n)=\frac{1}{2n}\|\y-\X\btheta\|_2^2$ for the linear model, which gives $\cL(\btheta_2;Z_1^n)-\cL(\btheta_1;Z_1^n)-\langle \btheta_2-\btheta_1,\nabla\cL(\btheta_1;Z_1^n)\rangle=\frac{1}{2n}\|\X(\btheta_1-\btheta_2)\|_2^2$. Then the cRSC and cRSS bounds will hold as long as
\begin{align}\label{eq:prop1.1}
\sup_K \sup_{\u\in K:\|\u\|_2=1}\frac{1}{n}\|\X\u\|_2^2\leq 3\lambda_1/2~~~{\rm and}~~~\inf_K \inf_{\u\in K:\|\u\|_2=1}\frac{1}{n}\|\X\u\|_2^2\geq \lambda_p/2,
\end{align}
where the supremum and infimum are over all subspaces $K=K_1+K_2$ as in Definition \ref{def:cRSC_cRSS}. This property (\ref{eq:prop1.1}) is invariant under a common rescaling of $\X^\top\X$, $\lambda_1$, and $\lambda_p$, so we may assume that $\lambda_p=1$.

Fixing any such subspace $K$,
note that the dimension of $K$ is upper bounded by $2S'+2$. Let $\bP_K$ be the orthogonal projection onto $K$, and write $\bP_K=\Q_K\Q_K^\top$, where $\Q_K$ has orthonormal columns spanning $K$. Then $\X\Q_K$ also has independent rows $\x_i^\top\Q_K$, where $\|\Q_K^\top\x_i\|_{\psi_2}^2\leq D$ and $\Cov[\Q_K^\top\x_i]=\Q_K^{\T}\bSigma\Q_K$.  Applying Eq.\ (5.25) of \citep{vershynin2010introduction} to $\X\Q_K$, for any $\zeta>0$ and some constants $C_3,C_4>0$ depending only on $D$,
\begin{align*}
\P\left[\left\|\frac{1}{n}\Q_K^\top\X^\top\X\Q_K-\Q_K^\top\bSigma\Q_K\right\|_{\op}\geq \max(\omega,\omega^2)\right]\leq 2e^{-C_3\zeta^2},\quad\quad \omega\equiv\frac{C_4\sqrt{S'}+\zeta}{\sqrt{n}}.
\end{align*}
Recall $g(S')=S'\log(1+\tfrac{p}{S'})$. Note that there are at most $\binom{p-1}{S'}\cdot\binom{p-1}{S'}\leq  e^{2g(S')}$ different subspaces $K$. Taking a union bound over $K$, and noting that any $\u\in K$ may be represented as $\u=\Q_K\v$ for such $K$, this yields
\begin{align*}
\P\left[\sup_{K}\sup_{\u\in K:\|\u\|_2=1}\left|\frac{1}{n}\u^{\T}\X^{\T}\X\u-\u^{\T}\bSigma\u\right|\geq \max(\omega,\omega^2)\right]\leq 2 e^{2g(S')-C_3\zeta^2}
\end{align*}
When $\|\u\|_2=1$, $\u^{\T}\bSigma\u\in [\lambda_p,\lambda_1]$. It follows, with probability at least $1-2 e^{2g(S')-C_3\zeta^2}$ and under our scaling $\lambda_p=1$, that
\begin{align*}
\sup_K \sup_{\u\in K:\|\u\|_2=1} \frac{1}{n}\|\X\u\|_2^2\leq \lambda_1+\max(\omega,\omega^2),
\end{align*}
and 
\begin{align*}
\inf_K \inf_{\u\in K:\|\u\|_2=1} \frac{1}{n}\|\X\u\|_2^2\geq (1-\max(\omega,\omega^2))_+.
\end{align*}
Then, for any $k>0$ and some constants $C_1,C_5>0$ depending only on $k,D$, assuming $n\geq C_1g(S')$ and setting $\zeta=\sqrt{C_5g(S')}$, (\ref{eq:prop1.1}) holds with probability at least $1-2e^{-kg(S')}$. Applying $g(S')\geq \log p$, this probability is at least $1-2p^{-k}$.

For cPGB, it follows from the first part of the proof that with probability at least $1-2p^{-k}$, $\|\X\u\|_2^2/n^2\leq 3\lambda_1/2n$ for every such subspace $K$ and every $\u\in K$. Applying Lemma 5.9 of \citep{vershynin2010introduction} and the assumption $\|e_i\|_{\psi_2}^2\leq \sigma^2$, conditional on $\X$ and this event, $\u^{\T}\X^\top\e/n$ is a subgaussian random variable with subgaussian parameter $C_6\lambda_1\sigma^2/n$, where $C_6>0$ is some absolute constant. Noting that $\nabla\cL(\btheta^*;Z_1^n)=-\X^\top\e/n$ and applying Lemma \ref{lem:ProjGrad}, $\cL$ has the cPGB $\Phi(S')=C_2\sigma\sqrt{\lambda_1g(S')/n}$ with probability at least $1-3p^{-k}$.

The bound for $\|\nabla \cL(\btheta^*;Z_1^n)\|_\infty=\|\X^\top \e/n\|_\infty$ follows from similarly noting that with probability at least $1-2p^{-k}$, $\|\X\u\|_2^2/n^2 \leq 3\lambda_1/2n$ for each standard basis vector $\u \in \R^p$. Conditional on $\X$ and this event, $\u^{\T}\X^\top\e/n$ is subgaussian with parameter $C_6\lambda_1\sigma^2/n$ for every standard basis vector $\u$. Then the bound for $\|\X^\top \e/n\|_\infty$ follows from the subgaussian tail bound and a union bound over all such $\u$. Finally, applying a union bound over $1\leq t\leq \tau$ completes the proof.
\end{proof}

\begin{proof}[Proof of Proposition \ref{thm:prop2}]
Similar to the proof of Proposition \ref{thm:prop1}, we consider fixed $t$ and then apply a union bound over $1\leq t\leq \tau$.

For cRSC and cRSS, note that $\cL(\btheta;Z_1^n)=\frac{1}{n}\sum_{i=1}^n (b(\x_i^\top\btheta)-y_i\x_i^\top \btheta)$, which gives
\begin{align*}
&\cL(\btheta_2;Z_1^n)-\cL(\btheta_1;Z_1^n)-\langle \btheta_2-\btheta_1,\nabla\cL(\btheta_1;Z_1^n)\rangle\\
&=\frac{1}{n}\sum_{i=1}^n (b(\x_i^{\T}\btheta_2)-b(\x_i^{\T}\btheta_1)-b'(\x_i^{\T}\btheta_1)x_i^{\T}(\btheta_2-\btheta_1)).
\end{align*}
Applying the assumption on $b$,
\begin{align*}
\frac{\alpha_b}{2n}\|\X(\btheta_2-\btheta_1)\|_2^2\leq\cL(\btheta_2;Z_1^n)-\cL(\btheta_1;Z_1^n)-\langle \btheta_2-\btheta_1,\nabla\cL(\btheta_1;Z_1^n)\rangle\leq  \frac{L_b}{2n}\|\X(\btheta_2-\btheta_1)\|_2^2.
\end{align*}
Then cRSC and cRSS hold for $(T_{t-1},T_t)$ with probability $1-2p^{-k}$, by (\ref{eq:prop1.1}) and the same argument as Proposition \ref{thm:prop1}.

For cPGB, note that $\nabla \cL(\btheta^*;Z_1^n)=-\frac{1}{n}\sum_{i=1}^n \x_i e_i=-\X^\T\e/n$ where $\e=(e_1,\ldots,e_n)$. Similar to the proof of Proposition \ref{thm:prop1}, we condition on $\X$ and the probability $1-2e^{-kg(S')}$ event $\mathcal{E}$ that $\frac{1}{n}\|\X\u\|_2^2\leq 3\lambda_1/2$ for every $K=K_1+K_2$ and every $\u\in K$. Then similar to the proof of Lemma \ref{lem:ProjGrad}, we get for any $\zeta>0$
\begin{align*}
&\P(\sup_{K}\|\bP_K\X^{\T}\e\|_2/\sqrt{n}>\zeta)\\
&\leq  e^{2g(S')}\cdot 5^{2S'+2}\cdot \Big(\sup_{\w:\|\w\|_2=1}\P(\{2|\w^\T\X^\T\e|/\sqrt{n} \geq \zeta\} \cap \mathcal{E})+2e^{-kg(S')}\Big).
\end{align*}
Note that (\ref{eq:GLMecondition}) implies $\Var(e_i)\leq C_3$ where $C_3>0$ is some constant depending only on $D_1,D_2,\beta$. If $1<\beta\leq 2$, applying Lemma \ref{lem:aux1}, 
\begin{align*}
\P(\sup_{K}\|\bP_K\X^{\T}\e\|_2/\sqrt{n}>\zeta)\leq  e^{2g(S')}\cdot 5^{2S'+2}\cdot \Big(2e^{-\zeta^\beta/(C_4\sqrt{\lambda_1})^\beta}+2e^{-kg(S')}\Big),
\end{align*}
where $C_4>0$ is some constant depending only on $D_1,D_2,\beta$. Then for any $k>0$ and some constant $C_2>0$ depending only on $k,D,D_1,D_2,\beta$, setting $\zeta=C_2\sqrt{\lambda_1}\cdot g(S')^{1/\beta}$ and applying $g(S')\geq \log p$, we have
\begin{align*}
\P(\sup_{K}\|\bP_K\X^{\T}\e\|_2/n>C_2\sqrt{\lambda_1/n}\cdot g(S')^{1/\beta})\leq  p^{-k}.
\end{align*}
If $\beta=1$, applying Lemma \ref{lem:aux1}, we get 
\begin{align*}
\P(\sup_{K}\|\bP_K\X^{\T}\e\|_2/n>C_2\sqrt{\lambda_1/n}\log n\cdot g(S'))\leq  p^{-k}.
\end{align*}

The bound for $\|\nabla \cL(\btheta^*;Z_1^n)\|_\infty=\|\X^\top \e/n\|_\infty$ is similar to the proof of Proposition \ref{thm:prop1}. Note that with probability at least $1-2p^{-k}$, $\|\X\u_i\|_2^2/n\leq 3\lambda_1/2$ for each standard basis vector $\u_i\in\R^p$ with $1\leq i\leq p$. We condition on $\X$ and this event $\mathcal{E}'$ and get for any $\zeta>0$
\begin{align*}
\P(\max_{1\leq i\leq p}|\u_i\X^{\T}\e|/\sqrt{n}>\zeta)\leq p\cdot\Big(\max_{1\leq i\leq p}\P(\{|\u_i\X^{\T}\e|/\sqrt{n}>\zeta\}\cap \mathcal{E}')+2p^{-k}\Big).
\end{align*}
Similarly, if $1<\beta\leq 2$, applying Lemma \ref{lem:aux1}, for any $k>0$ and some constant $C_3$ depending only on $k,D,D_1,D_2,\beta$, we get
\begin{align*}
\P(\max_{1\leq i\leq p}|\u_i\X^{\T}\e|/n> C_3(\log p)^{1/\beta}\sqrt{\lambda_1/n})\leq p^{-k}.
\end{align*}
If $\beta=1$, applying Lemma \ref{lem:aux1}, we get 
\begin{align*}
\P(\max_{1\leq i\leq p}|\u_i\X^{\T}\e|/n> C_3(\log n)(\log p)\sqrt{\lambda_1/n})\leq p^{-k}.
\end{align*}
Finally, applying the union bound over $1\leq t\leq \tau$ completes the proof.
\end{proof}

\section{Auxilliary Lemma}
The following lemma comes from \citep[Lemma 1]{huang2008adaptive}.

\begin{lemma}\label{lem:aux1}
Suppose $X_1,\ldots,X_n$ are $i.i.d.$ random variables with $\E X_i=0$ and $\Var (X_i)=\sigma^2$. Further suppose, for $1\leq d\leq 2$ and certain constants $C_1,C_2>0$, their tail probabilities satisfy 
\begin{align*}
\P(|X_i|\geq\zeta)\leq C_1\exp(-C_2\zeta^d),
\end{align*}
for all $\zeta>0$. Let $c_1,\ldots,c_n$ be constants satisfying $\sum_{i=1}^n c_i\leq M^2$ and $W=\sum_{i=1}^n c_iX_i$. Then we have
\begin{align*}
\|W\|_{\psi_d}\leq\left\{\begin{array}{cc}
K_dM \{\sigma+C_3\},&~~~1<d\leq 2\\
K_1M \{\sigma+C_4\log n\},&~~~d=1
\end{array}\right.
\end{align*}
where $K_d$ is a positive constant depending only on $d$, $C_3$ is some positive constant depending only on $C_1,C_2,d$ and $C_4$ is some positive constant depending only on $C_1,C_2$. Consequently,
\begin{align*}
\P(|W|>\zeta)\leq\left\{\begin{array}{cc}
2\exp\{-(\zeta/(K_dM(\sigma+C_3)))^d\},&~~~1<d\leq 2,\\
2\exp\{-\zeta/(K_1M(\sigma+C_4\log n))\},&~~~d=1.
\end{array}\right.
\end{align*}
\end{lemma}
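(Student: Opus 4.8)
The plan is to control the Orlicz norm $\|W\|_{\psi_d}$ by a truncation argument that isolates a sub-Gaussian bulk, governed by the variance $\sigma^2$, from a heavy tail governed by the individual $\psi_d$-tails of the $X_i$. Throughout I would read the coefficient condition in its natural $\ell_2$ form $\sum_i c_i^2 \le M^2$, which in particular gives $\max_i |c_i| \le M$ and $\sum_i |c_i|^p \le M^p$ for $p \ge 2$. As a first, routine step I would record the single-variable consequences of the tail hypothesis: integrating $\P(|X_i| \geq \zeta) \leq C_1 e^{-C_2 \zeta^d}$ against the definition of the Orlicz norm yields $\|X_i\|_{\psi_d} \leq C_0$ for a constant $C_0$ depending only on $C_1, C_2, d$, together with the moment growth $\E|X_i|^p \leq (C_0\, p^{1/d})^p$ for $p\ge 1$; crucially, the second moment remains pinned at the possibly much smaller value $\E X_i^2 = \sigma^2$.

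Next I would fix a truncation level $\tau$ and write $X_i = X_i \mathbf{1}\{|X_i| \le \tau\} + X_i \mathbf{1}\{|X_i| > \tau\}$, which after recentering decomposes $W = \bar W + \tilde W$ with $\|W\|_{\psi_d} \le \|\bar W\|_{\psi_d} + \|\tilde W\|_{\psi_d}$ (valid since $\psi_d$ is convex for $d\ge 1$). The bounded part $\bar W = \sum_i c_i \bar X_i$ has summands bounded by $|c_i|\tau \le M\tau$ and variances at most $\sigma^2$, so Bernstein's inequality gives the mixed tail $\P(|\bar W| > t) \le 2\exp(-c\min(t^2/(\sigma^2 M^2),\, t/(\tau M)))$: the quadratic regime contributes the scale $\sigma M$ and the linear regime the scale $\tau M$. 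The large part $\tilde W$ is controlled by the union bound $\P(\exists i : |X_i| > \tau) \le n\,C_1 e^{-C_2\tau^d}$, which forces the choice $\tau \asymp (C_2^{-1}\log n)^{1/d}$ to make this negligible over all $n$ summands, with the residual contribution of $\tilde W$ handled through the individual $\psi_d$-norms $\|X_i\|_{\psi_d}\le C_0$.

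Finally I would assemble the tail of $W$ and convert back to an Orlicz bound by estimating $\E \psi_d(|W|/A)$, after which the stated tail inequalities follow from Markov applied to $\psi_d(|W|/\|W\|_{\psi_d})$. Here the two cases separate exactly as in the statement. For $1 < d \le 2$ the genuine $\psi_d$ decay coming from the heavy part dominates the truncation-induced exponential term, and the scale $\tau M \asymp M(\log n)^{1/d}$ is absorbed into the clean constant, giving $\|W\|_{\psi_d} \le K_d\, M(\sigma + C_3)$. For $d = 1$ the truncation must be taken at the level $\tau \asymp \log n$ to defeat the $n$-fold union bound, and the Bernstein sub-exponential scale $\tau M \asymp M\log n$ is precisely the factor that survives, giving $\|W\|_{\psi_1} \le K_1\, M(\sigma + C_4\log n)$.

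The main obstacle, I expect, is the bookkeeping in this last assembly: balancing the truncation level $\tau$ against the target deviation $t$ and combining the Bernstein control of $\bar W$ with the union-bound control of $\tilde W$, so that the resulting tail is genuinely of $\psi_d$ type when $d>1$ (rather than merely the sub-exponential type that a crude Bernstein estimate on truncated variables would yield, which is strictly weaker for $d>1$) while still correctly producing the $\log n$ factor in the borderline case $d=1$. A moment-based alternative via Rosenthal-type inequalities looks cleaner to state but introduces a spurious $p/\log p$ factor that obstructs the $\psi_d$-conversion, which is why I would favor the truncation route that reproduces the stated form directly.
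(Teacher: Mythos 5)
First, note that the paper does not actually prove this lemma: it is quoted from Lemma 1 of \citep{huang2008adaptive}, whose proof runs through the Orlicz-norm machinery of van der Vaart and Wellner. For $1<d\le 2$ that proof applies Proposition A.1.6 there, a Gluskin--Kwapie\'n-type inequality $\|\sum_i Y_i\|_{\psi_d}\le K_d\big(\E|\sum_i Y_i| + (\sum_i\|Y_i\|_{\psi_d}^{d'})^{1/d'}\big)$ with $1/d+1/d'=1$, together with $\E|W|\le \sigma M$ and $(\sum_i |c_i|^{d'})^{1/d'}\le(\sum_i c_i^2)^{1/2}\le M$ (valid because $d'\ge 2$ when $d\le 2$); for $d=1$ the same machinery produces the term $\|\max_i |c_iX_i|\|_{\psi_1}$, which is bounded by $K(\log n)\max_i\|c_iX_i\|_{\psi_1}$ via the Orlicz maximal inequality, and that is the sole source of the $\log n$. (Your reading of the typo $\sum_i c_i\le M^2$ as $\sum_i c_i^2\le M^2$ is the correct one.)

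Your truncation-plus-Bernstein plan has a genuine gap for $1<d\le 2$: you correctly name the obstruction but do not resolve it. With the fixed truncation $\tau\asymp(\log n)^{1/d}$, Bernstein gives the bounded part only a sub-exponential tail $\exp(-ct/(\tau M))$ in its linear regime, and for $d>1$ this is strictly weaker than the target $\exp(-(t/(KM))^d)$ as soon as $t\ge M(cK^d/\tau)^{1/(d-1)}$ --- a threshold that \emph{shrinks} with $n$; moreover, no $n$-free bound on $\E\exp((|\bar W|/A)^d)$ can be extracted from a sub-exponential tail at scale $\tau M$, so the assertion that $M(\log n)^{1/d}$ ``is absorbed into the clean constant'' is false. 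Upgrading to Hoeffding at sub-Gaussian scale $\tau M$ only rescues the range $t\gtrsim M\tau^{2/(2-d)}$, leaving an $n$-dependent middle window of deviations uncovered. Nor can the heavy part supply the $\psi_d$ decay as you suggest: the union bound gives a probability with no decay in $t$ at all, and the triangle inequality $\|\tilde W\|_{\psi_d}\le\sum_i|c_i|\,\|X_i\1\{|X_i|>\tau\}\|_{\psi_d}$ loses the factor $\sum_i|c_i|$, which can be of order $\sqrt{n}\,M$. Closing this gap requires either a deviation-dependent (multilevel) truncation in the style of Gluskin--Kwapie\'n or Hitczenko--Montgomery-Smith, or the Hoffmann-J{\o}rgensen-based Proposition A.1.6 that the cited source invokes --- precisely the ingredient your sketch omits. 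For $d=1$, by contrast, your plan is essentially sound: the Bernstein linear scale $\tau M\asymp M\log n$ matches the stated bound (indeed a direct moment-generating-function argument under the Bernstein moment condition even yields the stronger $n$-free scale $M(\sigma+C)$), though the heavy part still needs more than a union bound --- e.g.\ the maximal inequality above --- to become a bona fide $\psi_1$-norm bound rather than a statement on a high-probability event.
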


\end{document}